\newcolumntype{d}{D{.}{.}{-1}}
\def\cA{\mathcal{A}}
\def\cD{\mathcal{D}}
\def\cS{\mathcal{S}}
\renewcommand{\Pr}{\mathop{\mathbf{Pr}}}
\def\calN{\mathcal{N}}
\def\calL{\mathcal{L}}
\def\cardD{n}
\newtheorem{lem}{Lemma}[section]
\newtheorem{thm}[lem]{Theorem}
\newtheorem{definition}[lem]{Definition}
\renewcommand{\epsilon}{\varepsilon}
\newcommand{\ltwo}[1]{\left\|#1\right\|_2}
\title{Differentially Private Image Classification from Features}
\author[1]{Harsh Mehta}
\author[1]{Walid Krichene}
\author[1]{Abhradeep Thakurta}
\author[1]{Alexey Kurakin}
\author[2]{Ashok Cutkosky}
\affil[1]{Google Research \thanks{$harshm|walidk|athakurta|kurakin@google.com$}}
\affil[2]{Boston University \thanks{$ashok@cutkosky.com$}}
\date{}
\begin{document}

\maketitle

\begin{abstract}
In deep learning, leveraging transfer learning has recently been shown to be an effective strategy for training large high performance models with Differential Privacy (DP). Moreover, somewhat surprisingly, recent works have found that privately training just the last layer of a pre-trained model provides the best utility with DP. While past studies largely rely on using first-order differentially private training algorithms like DP-SGD for training large models, in the specific case of privately learning from features, we observe that computational burden is often low enough to allow for more sophisticated optimization schemes, including second-order methods. To that end, we systematically explore the effect of design parameters such as loss function and optimization algorithm. We find that, while commonly used logistic regression performs better than linear regression in the non-private setting, the situation is reversed in the private setting. We find that least-squares linear regression is much more effective than logistic regression from both privacy and computational standpoint, especially at stricter epsilon values ($\epsilon < 1$). On the optimization side, we also explore using Newton's method, and find that second-order information is quite helpful even with privacy, although the benefit significantly diminishes with stricter privacy guarantees. While both methods use second-order information, least squares is more effective at lower epsilon values while Newton's method is more effective at larger epsilon values. To combine the benefits of both methods, we propose a novel optimization algorithm called DP-FC, which leverages feature covariance instead of the Hessian of the logistic regression loss and performs well across all $\epsilon$ values we tried. With this, we obtain new SOTA results on ImageNet-1k, CIFAR-100 and CIFAR-10 across all values of $\epsilon$ typically considered. Most remarkably, on ImageNet-1K, we obtain top-1 accuracy of 88\% under DP guarantee of (8, $8 * 10^{-7}$) and 84.3\% under (0.1, $8 * 10^{-7}$).
\end{abstract}

\section{Introduction}
Despite impressive performance, large machine learning models are susceptible to attacks. Previous work has demonstrated successful membership attacks where the goal is to extract exact instances of training examples that the model was trained on \citep{shokri2017membership,carlini2019secret, carlini2020extracting,choquettechoo2020labelonly,liu2021mldoctor,balle2022reconstructing} and models of larger size are known to be more likely to memorize training data. These attacks can be quite egregious if the model was trained on sensitive data like personal photos or emails. One approach to mitigate this risk is to train such models with privacy guarantees. In particular, Differential Privacy (DP) has become the gold standard in quantifying the risk and providing formal guarantees against membership attacks \citep{adv_inst2021}.

Informally, Differentially Privacy implies that an adversary learns almost the same thing about an individual data point independent of their presence or absence in the training data set. More formally, DP is defined as follows:
\begin{definition}
[Differential Privacy \citep{dwork2006dp,ODO}] A randomized algorithm $\cA$ is $(\epsilon,\delta)$-differentially private if, for any pair of datasets $D$ and $D'$ differing in at most one example (called {\em neighboring datasets}), and for all events $\cS$ in the output range of $\cA$, we have 
$$\Pr[\cA(D)\in \cS] \leq e^{\epsilon} \cdot \Pr[\cA(D')\in \cS] +\delta,$$
where the probability is over the randomness of $\cA$.
\label{def:dp}
\end{definition}

In the field of deep learning, Differentially Private Stochastic Gradient Descent (DP-SGD) \citep{song2013stochastic,Bassily_2014,abadi2016dpsgd} is the most commonly used method for training models with DP guarantees. While DP-SGD is a fairly general algorithm, a naive application can suffer from several computational challenges. To make matters worse, the gap between performance of a model with and without privacy typically widens as the model is made larger. This stands as a significant obstacle to a wider adoption and deployment of large practical models with privacy guarantees.

For the problem of Image Classification, several works have shown that transfer learning can be a very effective strategy in order to improve privacy-utility trade-off when formal privacy guarantees are required \citep{kurakin2022training,mehta2022large,dm_transfer_2022}. In this setting, a model is first pre-trained with ``non-sensitive" data without privacy guarantees, then fine-tuned on a ``sensitive" dataset over which a formal privacy guarantee is required. Similar to previous works, we simulate several publicly available image classification benchmarks (like ImageNet) as ``sensitive" datasets.

Interestingly, \cite{mehta2022large,dm_transfer_2022} observe that privately fine-tuning just the last layer of a pre-trained model (using DP-SGD) leads to state of the art results in the ImageNet-1k dataset. This is quite fortuitous, since privately fine-tuning the full model typically introduces significant computational challenges. We build on this observation, and perform a comprehensive exploration of various design parameters, including the choice of loss function and optimization algorithm, beyond simple DP-SGD. In this restricted setting of learning a single layer privately using features extracted from a pre-trained model, more sophisticated methods, such as second-order methods, are computationally viable. Our main contributions are as follows:

\begin{itemize}
    \item Somewhat surprisingly, we find that linear regression solved using DP Least Squares performs much better than logistic regression solved using DP-SGD, especially at lower epsilons.
    \item Postulating that the benefits largely stem from the use of second-order information in the least squares solution, we further explore using Newton's method to solve logistic regression. While Newton's method outperforms linear regression in the non-private setting, we find that it still performs worse with privacy constraints, largely because sanitizing the Hessian with logistic regression requires adding far more noise than in linear regression, where part of the Hessian can be shared across all classes.
    \item To combine the benefits of both, we introduce a method which we call Differentially Private SGD with Feature Covariance (abbreviated as DP-FC) where we simply replace the Hessian in Newton's method with sanitized Feature Covariance. Using Feature Covariance instead of Hessian allows us to make use of second-order information in the training procedure while sharing it across classes and iterations, which greatly reduces the amount of noise that needs be added to sanitize it. This allows us to continue using logistic regression, which performs better in non-private setting, while benefiting from improved privacy-utility trade-off as seen with linear regression in the private setting.
    \item With DP-FC, we surpass previous state of the art results considerably on 3 image classification benchmarks, namely ImageNet-1k, CIFAR-10 and CIFAR-100, just by performing DP fine-tuning on features extracted from a pre-trained model, see Table \ref{tab:method_comparison} for a summary. Consistent with previous works, we also find that performance increases as the pre-training dataset and the model are made larger.
\end{itemize}

\begin{table}[]
\centering
\small
\begin{tabular}{lcccccc}
    \toprule
        \addlinespace[0.1cm]
Dataset &                           Epsilon & Previous SOTA & Accuracy & Method & Epochs ($=$ Steps) & Pretraining DS  \\
\midrule
\multirow{9}{*}{CIFAR-10}
&         0.01  &       & 97.4 & DP-LS & 1 & JFT \\
&         0.05  &       & 98.2 & DP-FC & 10 & JFT \\
&         0.1    &     & 98.4 & DP-FC & 10 & JFT \\
&         0.5    &     & 98.8 & DP-FC & 10 & JFT \\
&        1.0     &  96.7   & 98.8 & DP-FC & 10 & JFT\\
&       2.0     &  97.1  & 98.9 & DP-FC & 10 & JFT\\
&      4.0      & 97.2   & 98.9 & DP-FC & 10 & JFT\\
&     8.0       & 97.4  & 98.9 & DP-FC & 10 & JFT\\
&    $\infty$   &    & 98.9 & LS & 1 & JFT\\

                                  \addlinespace[0.1cm]
                                         \hline
                                          \addlinespace[0.1cm]

\multirow{9}{*}{CIFAR-100} 
&         0.01  &       & 77.2 & DP-LS & 1 & I21K \\
&         0.05  &       & 80.3 & DP-LS & 1 & JFT \\
&         0.1   &      & 82.5 & DP-LS & 1 & JFT \\
&         0.5   &      & 86.2 & DP-FC & 10 & JFT \\
&        1.0    &  83.0   & 88.1 & DP-FC & 10 & JFT\\
&       2.0     &  86.2  & 89.0 & DP-FC & 10 & JFT\\
&      4.0      &  87.7  & 90.0 & DP-FC & 10 & JFT\\
&     8.0       &  88.4   & 90.1 & DP-FC & 10 & JFT\\
&    $\infty$   &   & 90.6 & LS & 1 & JFT\\

                                  \addlinespace[0.1cm]
                                         \hline
                                          \addlinespace[0.1cm]

\multirow{9}{*}{ImageNet-1K} 
&         0.01&         & 82.4 & DP-LS & 1 & JFT \\
&         0.05&         & 83.8 & DP-LS & 1 & JFT \\
&         0.1 &        & 84.3 & DP-FC & 1 & JFT \\
&         0.5 &        & 86.1 & DP-FC & 10 & JFT \\
&        1.0  &   84.4    & 86.8 & DP-FC & 10 & JFT\\
&       2.0   &  85.6    & 87.4 & DP-FC & 10 & JFT\\
&      4.0    &  86.0   & 87.7 & DP-FC & 10 & JFT\\
&     8.0     &  86.7  & 88.0 & DP-FC & 10 & JFT\\
&    $\infty$ &   & 88.9 & Newton & 10 & JFT\\

\bottomrule
    \end{tabular}
        \caption{Compilation of our best private Top-1 test accuracies. All number are SOTA across all epsilons to the best of our knowledge. Previous state of the art for CIFAR-10 and CIFAR-100 were reported form \cite{bu2022scalable_oldsota} and for ImageNet-1K from \cite{dm_transfer_2022}. We denote $\epsilon$ to be $\infty$ for non-private setting where we turn off all sanitization steps including clipping. We set $\delta$ to $8 * 10^{-7}$ for ImageNet-1k, and $1 * 10^{-5}$ for CIFAR-10 and CIFAR-100. Interestingly, in the non-private setting, most previous works (including \cite{zhai2021scaling}) use Linear Regression when finetuning from features but we found that Logistic Regression (solved using Newton's method) performs much better and leads to an impressive 88.9\% accuracy when finetuning just the last layer. To put this in perspective, this is only 1.1\% less than the current state of the art non-private accuracy of 91\% on ImageNet-1k \citep{yu2022coca}. We report extensive hyperparameter details in the appendix for reproducibility of our results.}
            \label{tab:method_comparison}
\end{table}

\section{Private Learning from Features}
\label{sec:method}
In this section, we describe the details of optimization strategies we considered, and state the privacy guarantees for each of them.

Given a data set $\cD=\{(x_1, y_1),\cdots,(x_n, y_n)\}$, we optimize the function $\calL:\mathbb{R}^{m \times d}\to\mathbb{R}$ defined as follows
% \begin{align}
%     \calL(\theta) \triangleq \frac{1}{|\cD|}\sum_{(x, y) \in \cD} \ell(\theta, (x, y))
% \end{align}

\begin{align}
    \calL(\theta) \triangleq \frac{1}{n} \sum_{i = 1}^n \sum_{j = 1}^m \ell(\langle \theta_j, x_i\rangle ,y_{ij})
\end{align}
where $n$ is the number of examples, $m$ is the number of classes, $\theta \in \mathbb{R}^{m \times d}$ is the weight matrix to be learned, $x_i$ is the feature vector for example $i$, $y_{ij}$ is the label of example $i$ and class $j$, and $\ell$ is a convex loss function. We assume that $y_{ij} \in [0, 1]$ for all $i,j$. Additionally, we also use the short hand $\ell( \theta;(x_i, y_i)) = \sum_{j = 1}^m \ell(\langle \theta_j, x_i\rangle ,y_{ij})$  where helpful in order to simplify the notation.

In the case of learning from features extracted from a pre-trained model, the feature vectors $x_i$ are last layer features. Further, unless otherwise specified, we will assume $\ell$ to be the logistic loss, i.e. $\ell_{\text{logistic}}(z, y) = - y \log \sigma(z) - (1-y) \log (1-\sigma(z))$ where $\sigma(z) = \frac{1}{1+e^{-z}}$. Finally, we will also assume that each step of optimization considers the whole batch, which greatly simplifies both the privacy analysis of algorithms and the experiments. Rest of this section includes descriptions of several iterative solvers of this minimization problem, both in non-private and private settings. Some of these methods rely on the fact that we are only interested in fine-tuning just the last layer, while others are more general. In the privacy analysis, we use zCDP (zero - Concentrated Differential Privacy) \cite{bun2016concentrated}, but we state our empirical results always with final privacy guarantee in $(\epsilon, \delta)$-DP terms, as done in previous works.

\subsection{DP-SGD}
Arguably the most popular approach to solving the above minimization problem in the non-private setting is Stochastic Gradient Descent (SGD).
In the full-batch setting, at every iteration, SGD performs the update:
\begin{align}
    g_t \leftarrow \frac{1}{n}\sum_{i=1}^n\nabla\ell( \theta_t;(x_i, y_i)) \qquad
    \theta_{t+1} = \theta_t - \eta_t g_t
\end{align}
where $\eta_t$ denotes the learning rate used for for iteration $t$.

DP-SGD is a private variant of this algorithm and the baseline in all our experiments. Computationally, in order to bound the sensitivity of each training example, \cite{abadi2016dpsgd} suggest computing a gradient for each example separately and clipping each to a maximum norm of $C$ (a user-specified hyper-parameter):
\begin{align}
g_t \leftarrow \sum_{i=1}^n{\sf clip}\left(\nabla \ell(\theta_t;(x_i, y_i))\right) \qquad
\tilde g_{t} \leftarrow \frac{g_t + \calN\left(0, (\sigma C)^2\right)}{n}
\end{align}
where ${\sf clip}(v)=v\cdot\min\left\{1,\frac{C}{\|v\|_2}\right\}$.

After summing the clipped example gradients, a noise vector sampled from a Gaussian distribution with standard deviation $\sigma C$ is added, where $\sigma$ is a parameter that determines the privacy guarantee via the Gaussian mechanism. 
% The result is used to update the model parameters in place of the original $g_t$:
% \begin{align}
% g_{t} \leftarrow \frac{\tilde g_t+\calN\left(0, \sigma C\right)}{|B_t|}
% \end{align}
% Intuitively, since we bound the gradient norm first for every example and add proportional noise, this limits the contribution that a single example can make in the final model parameters and its predictions. 
% Formally, \emph{any} algorithm that uses this noisy $g_t$ (not just SGD) will be private by the standard post-processing property of differential privacy.

As shown in Algorithm \ref{alg:dpsgd}, once the gradient has been sanitized, we are free to use it to accumulate statistics (e.g. first or second moment estimates) which are typically useful with the optimization process. Algorithm \ref{alg:dpsgd} presents a generalized version of DP-SGD where the gradients get processed in the traditional DP-SGD fashion, and are then passed to a first-order optimizer as an input. This lets us instantiate DP versions of well-known first-order optimizers like SGD, Momentum and Adam. We employ DP-Adam in all our experiments. Finally, we omit the privacy analysis for the DP-SGD baseline since it is standard, but we do include in the appendix the details of the implementation we used for translation from privacy constraints to noise scale.

\begin{algorithm}[ht]
\caption{Generalized First Order Differentially Private Algorithm}
\begin{algorithmic}[1]
\REQUIRE Data set $D=\{(x_1, y_1),\cdots,(x_n, y_n)\}$ with $(x_i, y_i)\in \cD$, loss function: $\ell:\mathbb{R}^{m \times  d}\times\mathbb{R}\to\mathbb{R}$, a first order optimizer $Opt$, clipping norm: $C$, 
% constraint set: $\calC\subseteq\mathbb{R}^p$, 
number of iterations: $T$, noise multiplier: $\sigma$
% , learning rate: $\eta$.
\STATE Randomly initialize $\theta_0$.
\FOR{$t = 1,\dots,T$}
% \FOR{$j = 1, \dots,m$}
{\STATE {$g_{t} \leftarrow \frac{1}{n}\sum_{i = 1}^n{\sf clip}\left(\nabla \ell(\theta_{t};(x_i, y_{i}))\right)$}, where {${\sf clip}(v)=v\cdot\min\left\{1,\frac{C}{\|v\|_2}\right\}$}.\label{step:clip}}
\STATE $\tilde{g_{t}} \leftarrow g_{t}+\calN\left(0,(\sigma C)^2\right)$
{\STATE $\theta_{t+1} \leftarrow$ {single step of first order optimization with gradient $Opt( \tilde{g_{t}})$}\label{step:noiseDPSGD}}
% {\STATE { $\theta_{t+1}\leftarrow \Pi_\calC\left(\theta_t-\eta\left(g_t+\calN\left(0,(\sigma C)^2\right)\right)\right)$}, where  {$\Pi_\calC(v)=\argmin_{\theta\in\calC}\|v-\theta\|_2$}\label{step:noiseDPSGD}}
% \ENDFOR
\ENDFOR
{\STATE {\bf return} $\frac{1}{T}\sum\limits_{t=1}^T\theta_t$ or $\theta_T$\label{eq:lastDPSGD}.}
\end{algorithmic}
\label{alg:dpsgd}
\end{algorithm}

\subsection{DP-Newton}
Since the optimization problem under consideration is a relatively simple convex problem, second-order DP algorithms can be viable. We first consider a privatized version of the popular Newton's method and denote it as DP-Newton. To control the sensitivity of each training example, one naive approach is to compute per-example Hessians and clip their norm, in a similar fashion to example gradient clipping in DP-SGD. But even in our last-layer fine-tuning setting, this can be prohibitively expensive. For instance, training on features extracted from ViT-G for ImageNet-1k fine-tuning, Hessian tensor (in a block diagonal form) is of size [1000, 1664, 1664] with approximately 2.8B entries. In order to avoid instantiating the Hessian for every training example, we instead choose to clip the feature vectors $x_i$, then translate the clipping threshold into bounds on the Hessian and gradient norms. This is summarized in Algorithm \ref{alg:dpnewton}.

\begin{algorithm}[ht]
\caption{Differentially Private Newton's Method}
\begin{algorithmic}[1]

\REQUIRE Data set $D=\{(x_1, y_1),\cdots,(x_n, y_n)\}$ with $(x_i, y_i)\in \cD$, loss function: $\ell:\mathbb{R}^d\times\mathbb{R}\to\mathbb{R}$, regularization coefficient $\lambda$, learning rate $\eta$, clipping norm: $C$, 
% constraint set: $\calC\subseteq\mathbb{R}^p$, 
number of iterations: $T$, noise multiplier: $\sigma$, bound on the second derivative of the loss: $\beta_H$
% , learning rate: $\eta$.
\STATE Clip all features: $\tilde x_i\leftarrow {\sf clip}(x_i)$ for all $i\in\{1,\dots,n\}$ where {${\sf clip}(v)=v\cdot\min\left\{1,\frac{C}{\|v\|_2}\right\}$}.
\STATE Randomly initialize $\theta_0$.
\FOR{$t = 1,\dots,T$}
\FOR{$j = 1, \dots,m$}
%\STATE Select randomly without replacement a mini-batch of examples $B_t \subseteq D$
{\STATE {$g_{t,j} \leftarrow \sum_{i = 1}^n\nabla \ell(\theta_{t,j}^\top\tilde x_i, y_{ij})$}}.
{\STATE {$H_{t,j} \leftarrow \sum_{i = 1}^n\nabla^2 \ell(\theta_{t,j}^\top\tilde x_i, y_{ij}) + \lambda I $}}
{\STATE $\tilde g_{t,j} \leftarrow \frac{1}{\cardD} g_{t,j}+\calN\left(\vec{0}_{d},(\frac{\sigma C \sqrt{m}}{\cardD})^2\right)$ where $\calN\left(\vec{0}_{d},(\frac{\sigma C \sqrt{m}}{\cardD})^2\right)$ indicates a $d$-dimensional vector each of whose coordinates is an i.i.d. Gaussian with standard deviation $\sigma C \sqrt{m}/\cardD$.\label{line:gradN}}
{\STATE $\tilde H_{t,j} \leftarrow \frac{1}{\cardD} H_{t,j}+\calN\left(\vec{0}_{d\times d},\left(\frac{\sigma \beta_H C^2 \sqrt{m}}{\cardD}\right)^2\right)$ where $\calN\left(\vec{0}_{d\times d},\left(\frac{\sigma \beta_H C^2  \sqrt{m}}{\cardD}\right)^2\right)$ indicates a $d\times d$ matrix each of whose coordinates is an i.i.d. Gaussian with standard deviation $\sigma \beta_H C^2 \sqrt{m}/\cardD$.\label{line:hessN}}
{\STATE $\theta_{t+1,j} \leftarrow \theta_{t,j} - \eta {\tilde H_{t,j}}^{-1} \tilde g_{t,j}$ \label{line:newton-solution}}
% {\STATE { $\theta_{t+1}\leftarrow \Pi_\calC\left(\theta_t-\eta\left(g_t+\calN\left(0,(\sigma C)^2\right)\right)\right)$}, where  {$\Pi_\calC(v)=\argmin_{\theta\in\calC}\|v-\theta\|_2$}\label{step:noiseDPSGD}}
\ENDFOR
\ENDFOR
{\STATE {\bf return} $\theta_T$\label{eq:lastDPSGD}.}
\end{algorithmic}
\label{alg:dpnewton}
\end{algorithm}

% \harshnote{Define multi-class logistic regression loss, account for number of classes in clipping norm in eq 4 and and 7. Modify DP-FC similarly. Thm statement and sensitivity calculation can stay in main paper, proofs can move to the appendix.}

\begin{thm}[Privacy guarantee for Algorithm \ref{alg:dpnewton}]
Suppose $\ell$ is twice differentiable, and that for all $(z,y)$, $|\ell'(z, y)|\le 1$ and $|\ell''(z, y)|\le \beta_H$. Then, Algorithm \ref{alg:dpnewton} satisfies $\frac{T}{\sigma^2}$-zCDP.
\end{thm}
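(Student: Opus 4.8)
The plan is to treat Algorithm \ref{alg:dpnewton} as an adaptive composition of $2T$ Gaussian mechanisms — one for each noisy gradient $\tilde g_{t,j}$ (Line \ref{line:gradN}) and one for each noisy Hessian $\tilde H_{t,j}$ (Line \ref{line:hessN}), collected over the $T$ iterations — and to invoke the additivity of zCDP under adaptive composition together with the standard Rényi-divergence bound for the Gaussian mechanism. The first point I would establish is that these sanitized statistics are the only data-dependent quantities released: the parameter update $\theta_{t+1,j} \leftarrow \theta_{t,j} - \eta \tilde H_{t,j}^{-1}\tilde g_{t,j}$ on Line \ref{line:newton-solution} is a deterministic function of the noisy releases and the public learning rate, hence pure post-processing, which consumes no privacy budget. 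So it suffices to bound the cost of the $2T$ noisy releases.

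Next I would compute the per-example $\ell_2$-sensitivity of the un-noised statistics \emph{jointly across all $m$ classes}, since a single training example $(x_i,y_i)$ contributes to every class $j$. By the chain rule the per-example, per-class gradient is $\nabla\ell(\theta_{t,j}^\top\tilde x_i,y_{ij}) = \ell'(\theta_{t,j}^\top\tilde x_i,y_{ij})\,\tilde x_i$, whose norm is at most $|\ell'|\cdot\|\tilde x_i\|_2\le 1\cdot C = C$ using the hypothesis $|\ell'|\le 1$ and the feature clipping of Step 1. Since the single differing example perturbs all $m$ per-class sums simultaneously, combining in quadrature gives a joint gradient sensitivity of $\sqrt{\sum_{j=1}^m C^2} = C\sqrt{m}$, i.e.\ $C\sqrt{m}/\cardD$ after the $1/\cardD$ normalization. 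Analogously the per-class Hessian contribution is $\ell''(\theta_{t,j}^\top\tilde x_i,y_{ij})\,\tilde x_i\tilde x_i^\top$, whose Frobenius norm is at most $|\ell''|\cdot\|\tilde x_i\|_2^2\le\beta_H C^2$, yielding a joint Hessian sensitivity of $\beta_H C^2\sqrt{m}/\cardD$ (the data-independent $\lambda I$ term drops out).

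With the sensitivities matched to their calibrated noise, the cancellations fall out cleanly. The gradient release adds isotropic Gaussian noise of per-coordinate standard deviation $\sigma C\sqrt{m}/\cardD$, so by the Gaussian-mechanism bound $\rho = \Delta^2/(2\sigma_{\mathrm{noise}}^2)$ it is $\frac{(C\sqrt{m}/\cardD)^2}{2(\sigma C\sqrt{m}/\cardD)^2} = \frac{1}{2\sigma^2}$-zCDP; the Hessian release, with noise standard deviation $\sigma\beta_H C^2\sqrt{m}/\cardD$, is $\frac{1}{2\sigma^2}$-zCDP by the identical cancellation. Because the two noise draws are independent, within a single iteration their Rényi divergences add, giving $\frac{1}{2\sigma^2}+\frac{1}{2\sigma^2}=\frac{1}{\sigma^2}$-zCDP per step. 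Finally, since the statistics at iteration $t$ depend on the data only through the previously released (already privatized) iterates $\theta_{t,j}$, the adaptive composition theorem for zCDP applies, and summing $\frac{1}{\sigma^2}$ over the $T$ iterations yields the claimed $\frac{T}{\sigma^2}$-zCDP.

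The hard part will be the joint-over-classes sensitivity bookkeeping, i.e.\ justifying the $\sqrt{m}$ factor: one must confirm that a single neighboring-dataset change perturbs all $m$ per-class gradients (resp.\ Hessians) at once and that these perturbations combine in quadrature into exactly the $C\sqrt{m}/\cardD$ (resp.\ $\beta_H C^2\sqrt{m}/\cardD$) bound the noise is calibrated against, so that the $\sqrt{m}$ cancels and leaves a clean $\frac{1}{2\sigma^2}$ per statistic. A secondary subtlety worth stating explicitly is the adaptivity: the sensitivity at iteration $t$ is evaluated conditionally on a fixed $\theta_{t,j}$, which is legitimate precisely because $\theta_{t,j}$ is a function of prior releases — so the adaptive, rather than parallel, composition theorem is the correct tool.
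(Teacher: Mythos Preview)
Your proof is correct and follows essentially the same approach as the paper's: bound the per-example sensitivity of the gradient and Hessian statistics (using $|\ell'|\le 1$, $|\ell''|\le\beta_H$, and the feature clip $\|\tilde x\|_2\le C$), invoke the Gaussian mechanism, and compose over iterations. The only cosmetic difference is that the paper bounds the \emph{per-class} sensitivity as $C$ (resp.\ $\beta_H C^2$), obtains $\frac{1}{2m\sigma^2}$-zCDP per class, and then composes over the $m$ classes to reach $\frac{1}{2\sigma^2}$ per statistic per iteration, whereas you equivalently bound the \emph{joint-over-classes} sensitivity as $C\sqrt{m}$ (resp.\ $\beta_H C^2\sqrt{m}$) and apply the Gaussian mechanism once --- the arithmetic is identical.
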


Here $\ell'$ and $\ell''$ denote the first and second derivatives of $\ell$ with respect to its first argument. We choose the bound on the first derivative to be $1$ without loss of generality (via scaling of $\ell$). Note that the assumptions of the theorem are satisfied for the squared loss $\ell(z, y) = \frac{1}{2}(z-y)^2$ with $\beta_H = 1$, as well as for the sigmoid cross-entropy loss (i.e. logistic regression) with $\beta_H = \frac{1}{4}$. Indeed, $|\ell_\text{logistic}'(z, y)| = |y - \sigma(z)|$, and $|\ell_\text{logistic}''(z, y)| = \sigma(z) * (1 - \sigma(z))$ where $\sigma(z) = \frac{1} {1 + e^{-z}}$. The first expression is bounded by $1$ (since $y, \sigma(z) \in [0, 1]$) and the second expression is bounded by $\frac{1}{4}$.

\begin{proof}
The crux of the proof is to ensure that Lines \ref{line:gradN} and \ref{line:hessN} individually satisfy $\frac{1}{2m\sigma^2}$-zCDP. The rest of the proof is just simple composition of zCDP \cite{bun2016concentrated} across $m$ classes and $T$ iterations. To see this, first let $D$ and $D'$ be neighboring datasets and let $(x,y)$ be the differing datapoint between $D$ and $D'$. 

Now, notice that in the computation of $g_t$, we have the following: $\|g_{t,j}(D) - g_{t,j}(D')\| = \|\ell'(\langle \theta_{t,j}, \tilde x\rangle, y_j)\tilde x\|$. Thus, since $\ltwo{\tilde x}\le C$, and $|\ell'(\langle \theta_j ,x\rangle, y_j)| \leq 1$:
\begin{equation}
    \ltwo{g_{t,j}(D)-g_{t,j}(D')}\leq {C}
    \label{eq:sens1_newton}
\end{equation}
From \eqref{eq:sens1_newton} it immediately follows that the computation of $g_{t,j}$ for each $t\in\{0,\ldots,T-1\}$ and each $j \in \{1, \dots, m\}$ satisfies $\frac{1}{2m\sigma^2}$-zCDP. Now, moving on to the sensitivity of $H_t$ in Line \ref{line:hessN}, we have $H_{t,j}(D)-H_{t,j}(D')=\ell''(\langle\theta_j, \tilde x\rangle, y_j)  \tilde x\tilde x^\top$. Then, since $\ltwo{\tilde x}\leq C$ and $|\ell''(\langle \theta_j ,x\rangle, y_j)|\le \beta_H$, we have:
\begin{equation}
    \left\|H_{t,j}(D)-H_{t,j}(D')\right\|_F\leq \beta_H C^2,
    \label{eq:sensHess}
\end{equation}
and \eqref{eq:sensHess} immediately implies that the computation of $H_{t,j}$ for each $t\in\{0,\ldots,T-1\}$ and $j \in \{1, \dots, m\}$ satisfies $\frac{1}{2m\sigma^2}$-zCDP. This completes the proof.
\end{proof}

\subsection{DP-LS}
One drawback of DP-SGD and DP-Newton is that each training example $i$ contributes to the gradients (resp. Hessians) of all classes $j \in \{1, \dots, m\}$, so the sensitivity (and hence the scale of required noise) increases with the number of classes. This is visible in Algorithm \ref{alg:dpnewton} where the amount of noise added for privacy protection scales with $\sqrt{m}$ (Lines \ref{line:gradN} and \ref{line:hessN}). When the number of classes is large, this reduces signal-to-noise ratio and can hurt quality. In this section, we develop a method that aims to address this issue. We take inspiration from the matrix factorization literature, in which one can separate the loss function into the contribution of positive and negative classes. We assume that labels are binary ($y_{ij} \in \{0, 1\}$), and we consider the following quadratic loss function:
\[
\calL(\theta) = \frac{1}{2} \sum_{i = 1}^n \sum_{j = 1}^m y_{ij}(x_i^\top \theta_j - y_{ij})^2 + \frac{\alpha}{2}(x_i^\top \theta_j)^2 + \frac{\lambda}{2}\|\theta_j\|^2,
\]
in other words, we take $\ell(z, y) = \frac{1}{2}(y(z-y)^2 + \alpha z^2)$. The first term fits the positive labels (notice that this term vanishes when $y = 0$), while the second term fits negative labels, and $\alpha$ is hyper-parameter that trades-off the two terms. This formulation was studied by \cite{hu2008ials} and enjoys remarkable empirical success \cite{koren2015collaborative}. It turns out that this method is also well-suited for privacy, as we shall discuss below.

By expanding the quadratic terms, we can write the loss as
\begin{align*}
\calL(\theta) &= \frac{1}{2}\sum_{j = 1}^m \left(\theta_j^\top A_j \theta_j - 2\theta_j^\top b_j + \alpha \theta_j^\top G \theta_j + \lambda \theta_j^\top \theta_j\right)
\end{align*}
where for all $j \in \{1, \dots, m\}$
\begin{equation}
A_j = \sum_{i : y_{ij} = 1} x_i x_i^\top, \quad b_j = \sum_{i : y_{ij} = 1}x_i, \quad G = \sum_{i = 1}^n x_i x_i^\top
\label{eq:dpls_stats}
\end{equation}
The exact solution is then given by
\[
\theta_j = \left[A_j + \alpha G + \lambda I \right]^{-1} b_j.
\]
Notice that the solution $\theta_j$ for class $j$ depends on class-specific statistics (the matrix $A_j$ and the vector $b_j$), as well as the global quantity $G$. Algorithm \ref{alg:dpls} computes a private estimate of the solution by adding Gaussian noise to each of $G, A_j, b_j$ (Lines \ref{line:ls-gram}, \ref{line:ls-lhs}, and \ref{line:ls-rhs} respectively), then solving the linear system using the noised statistics (Line \ref{line:ls-solution}). This is a variant of the popular sufficient statistics perturbation algorithm for DP linear regression. One crucial observation is that the noisy version of $G$ is only computed once (Line \ref{line:ls-gram}), and reused for all classes. This allows to control the sensitivity of the solution w.r.t. each example; indeed, suppose example $i$ only has one positive class $j_0$, then that example only contributes to $G$, $A_{j_0}$, and $b_{j_0}$. In particular, the sensitivity of the solution (and hence the amount of noise we need to add) does not scale with the total number of classes, only with the number of \emph{positive} classes per example. This is represented by the parameter $k$ in Algorithm \ref{alg:dpls} ($k$ is equal to $1$ for single-class classification tasks, and even in multi-class tasks, we typically have $k \ll m$). We now give the formal privacy guarantee:

\begin{thm}[Privacy guarantee for Algorithm \ref{alg:dpls}]
Algorithm \ref{alg:dpls} satisfies $\frac{3}{2\sigma^2}$-zCDP.
\end{thm}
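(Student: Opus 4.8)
The plan is to mirror the structure of the DP-Newton proof: reduce the claim to showing that each of the three noised releases in Algorithm~\ref{alg:dpls} --- the shared Gram matrix $G$ (Line~\ref{line:ls-gram}), the collection of class matrices $\{A_j\}_{j=1}^m$ (Line~\ref{line:ls-lhs}), and the collection of class vectors $\{b_j\}_{j=1}^m$ (Line~\ref{line:ls-rhs}) --- individually satisfies $\frac{1}{2\sigma^2}$-zCDP. Since Line~\ref{line:ls-solution} merely solves a linear system built from these noised quantities, it is post-processing and incurs no additional privacy cost; the final bound $\frac{3}{2\sigma^2}$ then follows by summing the three guarantees via composition of zCDP \cite{bun2016concentrated}. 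Note that, unlike DP-Newton, there is no factor of $T$ because the sufficient statistics are released only once.

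For each release I would calibrate the Gaussian mechanism to its $\ell_2$-sensitivity, using the fact that adding $\calN(0, s^2)$ per coordinate to a statistic of $\ell_2$-sensitivity $\Delta$ yields $\frac{\Delta^2}{2s^2}$-zCDP. The features satisfy $\ltwo{x_i}\le C$ (via clipping), which bounds all three sensitivities. Let $D, D'$ be neighboring datasets differing in the single example $(x,y)$, and let $S = \{j : y_j = 1\}$ be its set of positive classes, with $|S| \le k$.

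The crux --- and the step I expect to be the main obstacle --- is the sensitivity analysis of the class-indexed collections, because here a single example simultaneously perturbs several coordinates and we must show the dependence is on $k$ rather than on $m$. For $G = \sum_i x_i x_i^\top$ there is a single term, so $\|G(D) - G(D')\|_F = \|x x^\top\|_F = \ltwo{x}^2 \le C^2$. For the collection $\{A_j\}$, removing $(x,y)$ changes $A_j$ by $x x^\top$ exactly for $j \in S$ and leaves the rest fixed; stacking the per-class perturbations, the joint Frobenius sensitivity is $\sqrt{\sum_{j\in S}\|x x^\top\|_F^2} = \sqrt{|S|}\,\ltwo{x}^2 \le \sqrt{k}\,C^2$. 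Identically, for $\{b_j\}$ the joint sensitivity is $\sqrt{|S|}\,\ltwo{x} \le \sqrt{k}\,C$. The key point is that the $\sqrt{\cdot}$ arises from the $\ell_2$ aggregation of $k$ equal contributions, so it scales with the number of positive classes and never with $m$.

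With these sensitivities in hand, the algorithm's noise scales carry the matching $\sqrt{k}$ (and $C^2$ or $C$) factors, so that each of the three Gaussian mechanisms contributes exactly $\Delta^2/(2s^2) = 1/(2\sigma^2)$-zCDP independently of $k$. Summing over the three releases gives $\frac{3}{2\sigma^2}$-zCDP, and post-processing closes the argument. The only place requiring care beyond routine calculation is verifying the joint (rather than per-class) sensitivity claim above, and confirming that the independent noise across classes is exactly what lets us account for the $A$ and $b$ collections each as a single release, so that the total is a composition over precisely three mechanisms.
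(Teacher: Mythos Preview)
Your proposal is correct and follows essentially the same approach as the paper: bound the $\ell_2$-sensitivity of each of the three released statistics ($G$, the concatenated $\{A_j\}$, and the concatenated $\{b_j\}$) using the clipping bound and the $k$-positive-classes assumption, invoke the Gaussian mechanism to get $\frac{1}{2\sigma^2}$-zCDP per release, then compose. The paper's proof is nearly identical, including the concatenation of the class-indexed statistics into a single release so that the $\sqrt{k}$ factor (rather than $\sqrt{m}$) appears in the joint sensitivity.
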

\begin{proof}
First, let $D, D'$ be neighboring data sets that differ in the data point $(x, y)$, and let $G(D)$ be the global statistic in \eqref{eq:dpls_stats} computed on data set $D$. Then $\|G(D) - G(D')\|_F = \|\tilde x \tilde x^\top\|_2$, and since $\|\tilde x\|_2 \leq C$ (due to clipping), $\|G(D) - G(D')\|_F \leq C^2$, thus the computation of $G$ (Line \ref{line:ls-gram}) is $\frac{1}{2\sigma^2}$-zCDP. Now moving to the computation of $A_j$: let $A(D)$ be the concatenation of all the class-specific statistics, i.e. $A(D) = [A_1(D) | \dots | A_m(D)]$. Notice that $\|A_j(D) - A_j(D')\|_F = \|\tilde x \tilde x^\top\|_2$ if $y_j = 1$ and $0$ otherwise. Since by assumption, the number of positive classes per example is bounded by $k$, we have that $\|A(D) - A(D')\|_F \leq \sqrt k\|\tilde x \tilde x^\top\|_F$, which is bounded by $\sqrt kC^2$ due to clipping. Therefore the computation of all $A_j$ combined (Line \ref{line:ls-lhs}) is $\frac{1}{2\sigma^2}$-zCDP. Finally, by a similar argument, we have that $\|b(D) - b(D')\|_2 \leq \sqrt kC$, and the computation of all $b_j$ combined (Line \ref{line:ls-rhs}) is $\frac{1}{2\sigma^2}$-zCDP.

By simple composition of zCDP \cite{bun2016concentrated}, the algorithm is $\frac{3}{2\sigma^2}$-zCDP.
\end{proof}

\begin{algorithm}[ht]
\caption{Differentially Private Least Squares}
\begin{algorithmic}[1]
\REQUIRE Data set $D=\{(x_1, y_1),\cdots,(x_n, y_n)\}$ with $(x_i, y_i)\in \cD$, weight coefficient $\alpha$, regularization coefficient $\lambda$, maximum number of positive classes per example: $k$, clipping norm: $C$, noise multiplier: $\sigma$.
\STATE Clip all features: $\tilde x_i\leftarrow {\sf clip}(x_i)$ for all $i\in\{1,\dots,n\}$.
{\STATE {$\tilde G \leftarrow \sum_{i = 1}^n \tilde x_i \tilde x_i^\top + \calN\left(\vec{0}_{d\times d},\left(\sigma C^2\right)^2\right)$}}, where $\calN\left(\vec{0}_{d \times d},(\sigma C^2)^2\right)$ indicates a $d\times d$-matrix, each of whose coordinates is an i.i.d. Gaussian with standard deviation $\sigma C^2$. \label{line:ls-gram}
\FOR{$j = 1, \dots, m$}
{\STATE {$\tilde A_j \leftarrow \sum_{i: y_{ij} = 1} \tilde x_i \tilde x_i^\top + \calN\left(\vec{0}_{d\times d},\left(\sigma \sqrt k C^2\right)^2\right)$} \label{line:ls-lhs}}
\STATE $\tilde b_j \leftarrow \sum_{i: y_{ij} = 1} \tilde x_i +\calN\left(\vec{0}_{d},(\sigma \sqrt k C)^2\right)$. \label{line:ls-rhs}
{\STATE $\theta_{j} \leftarrow \left[\tilde{A_j} + \alpha \tilde G + \lambda I\right]^{-1} \tilde b_j$\label{line:ls-solution}}
\ENDFOR{}
{\STATE {\bf return} $\theta$\label{eq:lastDPLS}.}
\end{algorithmic}
\label{alg:dpls}
\end{algorithm}

\subsection{DP-FC}
From our early experiments, we found that Newton's method with logistic regression performs better than least squares linear regression in non-private setting. But in the private setting, DP-LS outperforms DP-Newton, especially for lower values of epsilons. Notice that both are second-order methods, and both rely on estimating and inverting the Hessian (Line \ref{line:newton-solution} in Algorithm \ref{alg:dpnewton} and Line \ref{line:ls-solution} in Algorithm \ref{alg:dpls}); the main advantage of DP-LS is that the private hessian computation does not have to composed over classes or iterations. In this section, in order to mitigate the trade-off between the two methods, we introduce a method called Differentially Private SGD with Feature Covariance (DP-FC) which leverages covariance of features to make use of second-order information without paying the cost of composition over classes or iterations. Indeed, since feature covariance neither depends on the model parameters nor the prediction, it can be shared across both classes and iterations. This is described in Algorithm \ref{alg:dpfc}. The method can be interpreted as DP-SGD with preconditioning (where the approximate feature covariance $\tilde G$ is used as preconditioner). We empirically observe that this leads to greatly reduced sensitivity compared to DP-Newton and significant improvements in overall metrics across all values of epsilons we tried.

\begin{algorithm}[ht]
\caption{Differentially Private SGD with Feature Covariance (DP-FC) Method}
\begin{algorithmic}[1]
\REQUIRE Data set $D=\{(x_1, y_1),\cdots,(x_n, y_n)\}$ with $(x_i, y_i)\in \cD$, loss function: $\ell:\mathbb{R}^{m\times d}\times\mathbb{R}\to\mathbb{R}$, learning rate $\eta$, clipping norms: $C_G$ and $C_g$, number of iterations: $T$, noise multiplier: $\sigma$
\STATE Clip features for covariance computation: $\tilde x_i\leftarrow {\sf clip}(x_i)$ for all $i\in\{1,\dots,n\}$ where ${\sf clip}(x)=x\min\left\{1,\frac{C_G}{\|x\|_{2}}\right\}$.
\STATE Compute the feature covariance: $G = \sum_{i=1}^n (\tilde x_i\tilde x_i^\top)$.
\STATE $\tilde G \leftarrow \frac{1}{\cardD}G + \calN\left(\vec{0}_{d\times d},(\frac{\sigma C_G^2}{\cardD})^2I_d\right) + \lambda I$ \label{line:Sigmafc}
\STATE Randomly initialize $\theta_0$.
\FOR{$t = 1,\dots,T$}
{\STATE {$g_{t} \leftarrow \sum_{i = i}^n {\sf clip}(\nabla \ell(\theta_{t};(x_i, y_i))$} where {${\sf clip}(g)=g\cdot \min\left\{1,\frac{C_g}{\|g\|_{2}}\right\}$}.}
\STATE $\tilde{g_{t}} \leftarrow \frac{1}{\cardD} g_t+\calN\left(\vec{0}_{d},(\frac{\sigma C_g}{\cardD})^2\right)$. \label{line:gradNfc}
{\STATE $\theta_{t+1} \leftarrow \theta_t - \eta  \tilde{g_{t}} {\tilde{G}}^{-1}$}
% \FOR{$j = 1, \dots,m$}
% {\STATE {$g_{t,j} \leftarrow \sum_{i = i}^n \nabla \ell(\theta_{t,j}^\top\tilde x_i, y_{ij})$}.}
% \STATE $\tilde g_{t,j} \leftarrow \frac{1}{\cardD} g_{t,j} + \calN\left(\vec{0}_{d},(\frac{\sigma\sqrt m C}{\cardD})^2\right)$. \label{line:gradNfc}
% {\STATE $\theta_{t+1,j} \leftarrow \theta_{t,j} - \eta {\tilde{G}}^{-1} \tilde g_{t,j}$}
% \ENDFOR
\ENDFOR
{\STATE {\bf return} $\theta_T$\label{eq:lastDPSGD}.}
\end{algorithmic}
\label{alg:dpfc}
\end{algorithm}

\begin{thm}[Privacy guarantee for Algorithm \ref{alg:dpfc}]
Algorithm \ref{alg:dpfc} satisfies $\frac{(T+1)}{2\sigma^2}$-zCDP.
\end{thm}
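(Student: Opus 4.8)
The plan is to mirror the structure of the two preceding privacy proofs: exhibit the algorithm as an adaptive composition of $T+1$ Gaussian mechanisms — one for the feature covariance $\tilde G$ and one for each of the $T$ noisy gradients $\tilde g_t$ — show that each individual release satisfies $\frac{1}{2\sigma^2}$-zCDP, and then invoke additive (adaptive) composition of zCDP \cite{bun2016concentrated} to conclude $\frac{T+1}{2\sigma^2}$-zCDP. The only randomized, data-dependent releases in Algorithm \ref{alg:dpfc} are Line \ref{line:Sigmafc} and Line \ref{line:gradNfc}; every other step (clipping the features, adding the deterministic $\lambda I$, and the parameter update $\theta_{t+1} \leftarrow \theta_t - \eta \tilde g_t \tilde G^{-1}$) is post-processing of these releases and therefore incurs no additional privacy cost.

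For the covariance release, fix neighboring data sets $D, D'$ differing in the single point $(x, y)$, and note that $G(D) - G(D') = \tilde x \tilde x^\top$ after clipping. Since this is rank one, $\|G(D) - G(D')\|_F = \ltwo{\tilde x}^2 \le C_G^2$, so the $\ell_2$-sensitivity of $\frac{1}{\cardD} G$ is $\frac{C_G^2}{\cardD}$. Matching this against the noise standard deviation $\frac{\sigma C_G^2}{\cardD}$ added in Line \ref{line:Sigmafc}, the Gaussian mechanism gives $\frac{1}{2}\left(\frac{C_G^2/\cardD}{\sigma C_G^2 / \cardD}\right)^2 = \frac{1}{2\sigma^2}$-zCDP; the two normalizing factors of $\frac{1}{\cardD}$ cancel, so the guarantee is independent of $\cardD$.

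For each gradient release, the per-example clip of $\nabla \ell(\theta_t; (x_i, y_i))$ to $\ell_2$-norm $C_g$ directly bounds the contribution of any single example, so replacing $(x,y)$ changes $g_t$ by at most $C_g$, giving sensitivity $\frac{C_g}{\cardD}$ for $\frac{1}{\cardD} g_t$; against the noise of standard deviation $\frac{\sigma C_g}{\cardD}$ in Line \ref{line:gradNfc} this again yields $\frac{1}{2\sigma^2}$-zCDP. Unlike DP-Newton, no analytic bounds on $|\ell'|$ or $|\ell''|$ are needed here: the gradient clipping and the feature clipping (which controls the rank-one term $\tilde x \tilde x^\top$ in the covariance) enforce the sensitivities directly. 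The key structural point, and the reason DP-FC escapes the $\sqrt m$ penalty of DP-Newton and DP-LS, is that $g_t$ is the full all-class gradient clipped as a single vector and $\tilde G$ is one class-independent $d \times d$ matrix; neither release composes over classes, and $\tilde G$ is released only once rather than per iteration.

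Composing the single covariance release with the $T$ gradient releases gives $\frac{1}{2\sigma^2} + T \cdot \frac{1}{2\sigma^2} = \frac{T+1}{2\sigma^2}$-zCDP. I expect the main (though mild) subtlety to be justifying \emph{adaptive} composition: each $g_t$ is evaluated at $\theta_t$, which is itself a function of $\tilde G$ and of the earlier releases $\tilde g_1, \dots, \tilde g_{t-1}$. Since zCDP composes adaptively, conditioning each step on the previously released (already privatized) quantities is legitimate, and the iterates enter only through post-processing, so the bound is unaffected. Everything else is a routine sensitivity-plus-Gaussian-mechanism calculation.
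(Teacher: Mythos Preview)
Your proposal is correct and follows essentially the same approach as the paper: bound the sensitivity of the single covariance release and of each clipped gradient release, obtain $\frac{1}{2\sigma^2}$-zCDP for each via the Gaussian mechanism, and compose over the $T+1$ releases. Your treatment is in fact slightly more careful than the paper's — you make explicit the post-processing and adaptive-composition points that the paper leaves implicit — but the argument is the same.
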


\begin{proof}
The crux of the proof is to ensure that the computation of $\tilde G$ (Line \ref{line:Sigmafc}) and $\tilde g_t$ (Line \ref{line:gradNfc}) individually satisfy $\frac{1}{2\sigma^2}$-zCDP. The rest of the proof is by simple composition of zCDP \cite{bun2016concentrated} (once for the $\tilde G$ computation, and $T$ times for the $\tilde g_t$ computation).

Let $D$ and $D'$ be neighboring datasets and let $(x,y)$ be the differing datapoint between $D$ and $D'$. 

In the computation of $g_t$, we have the following: $|g_{t}(D) - g_{t}(D')| = |{\sf clip}(\nabla \ell(\theta_{t};(x, y))| \leq C_g$. It immediately follows that the computation of $\tilde g_t$ for each $t\in\{0,\ldots,T-1\}$ satisfies $\frac{1}{2\sigma^2}$-zCDP. Now, moving on to the sensitivity of $G$. We have $\|G(D) - G(D')\|= \|\tilde x \tilde x^\top\| \leq C_G^2$, since $\|\tilde x\|_2\leq C_G$. It immediately follows that $\tilde G$ satisfies $\frac{1}{2\sigma^2}$-zCDP, which completes the proof.
\end{proof}

\section{Empirical Results}
In this section, we present private fine-tuning results on several Image Classification datasets using the optimization schemes described in Section \ref{sec:method}.  We start with details about the datasets we used, model variants and other fine-tuning hyperparameters. We also include much more details about pre-training and hyperparameter tuning in the appendix.

\begin{table*}[!h]
	\centering
	\small
	\begin{tabular}{lllc|cccccccc}
		\toprule
		\addlinespace[0.1cm]
		& & & & \multicolumn{8}{c}{Epsilon} \\
		\addlinespace[0.1cm]
		\hline
		\addlinespace[0.1cm]
		Pretraining & Method                      & Epochs & NP            & 0.01 & 0.05 & 0.1  & 0.5  & 1.0  & 2.0  & 4.0  & 8.0  \\
		\midrule
		      
		\multirow{8}{*}{ImageNet-21K} 
		&  \multirow{ 1}{*}{DP-LS}
		                                          & 1 &      75.5          & 71.2 & 71.5 & 71.5 & 73.0 & 73.6 & 74.1 & 74.6 & 75.0 \\
		\addlinespace[0.1cm]
		%  \hline
		\addlinespace[0.1cm]
		            & \multirow{ 2}{*}{DP-Newton} & 1      & 72.7          & 70.2 & 71.3 & 71.3 & 71.4 & 71.6 & 71.7 & 72.0 & 72.2 \\
		            &                             & 10     & 78.2          & 66.5 & 71.0 & 71.4 & 71.9 & 71.9 & 72.3 & 73.0 & 74.2 \\
		\addlinespace[0.1cm]
		%  \hline
		\addlinespace[0.1cm]
		            & \multirow{ 2}{*}{DP-FC}     & 1      & 72.9          & \textbf{71.9} & \textbf{72.1} & \textbf{72.5} & 72.9 & 72.9 & 72.9 & 72.9 & 72.9 \\
		            &                             & 10     & 77.1          & 71.8 & 71.9 & 72.1 & \textbf{75.4} & \textbf{76.3} & \textbf{76.8} & \textbf{77.0} & \textbf{77.1} \\
		\addlinespace[0.1cm]
		%  \hline
		\addlinespace[0.1cm]
		            & \multirow{ 3}{*}{DP-Adam}   & 1      & 73.8          & -   & -   & -   & -   & 39.1 & 54.2 & 63.3 & 68.3 \\
		            &                             & 10     & 76.1          & -   & -   & 22.1 & 52.9 & 62.3 & 66.6 & 69.5 & 71.0 \\
		            &                             & 100    & \textbf{78.3} & -   & -   & -   & 47.2 & 59.1 & 65.8 & 68.8 & 70.4 \\
		%   ResNet-50x1 & 1 & $2 \cdot 10^{-3}$ \\
		        
		\addlinespace[0.2cm]
		\hline
		\addlinespace[0.2cm]
		    
		\multirow{8}{*}{JFT} & \multirow{ 1}{*}{DP-LS}
		                                          & 1      & 87.5          & \textbf{82.4} & \textbf{83.8} & 84.1 & 85.8 & 86.2 & 86.4 & 86.6 & 86.7 \\
		\addlinespace[0.1cm]
		%  \hline
		\addlinespace[0.1cm]
		            & \multirow{ 2}{*}{DP-Newton} & 1      & 85.9          & 77.8 & 80.3 & 81.0 & 82.9 & 83.6 & 84.0 & 84.5 & 84.9 \\
		            &                             & 10     & \textbf{88.9} & 76.0 & 79.7 & 80.1 & 81.8 & 82.9 & 83.1 & 84.7 & 85.3 \\
		\addlinespace[0.1cm]
		%  \hline
		\addlinespace[0.1cm]
		            & \multirow{ 2}{*}{DP-FC}     & 1      & 85.8          & 82.1 & 83.8 & \textbf{84.3} & 85.1 & 85.4 & 85.5 & 85.6 & 85.6 \\
		            &                             & 10     & 88.4          & 81.0 & 83.1 & 83.7 & \textbf{86.1} & \textbf{86.8} & \textbf{87.4} & \textbf{87.8} & \textbf{88.0} \\
		\addlinespace[0.1cm]
		%  \hline
		\addlinespace[0.1cm]
		            & \multirow{ 3}{*}{DP-Adam}   & 1      & 84.8          & -   & 68.7 & 74.0 & 82.1 & 83.7 & 84.4 & 84.8 & 84.8 \\
		            &                             & 10     & 87.3          & -   & 70.5 & 75.6 & 83.4 & 84.9 & 85.6 & 86.3 & 86.7 \\
		            &                             & 100    & 88.6          & -   & 49.7 & 64.4 & 78.3 & 81.5 & 83.9 & 85.4 & 86.3 \\
		\bottomrule
	\end{tabular}
	\caption{Comparison of Top-1 test accuracies when privately fine-tuning on Imagenet-1K. We denote accuracy $\leq$ 20\% with the symbol `-'. When pre-trained with JFT, we observe that DP-FC performs best for epsilon values ranging from [0.1, 8.0] whereas DP-LS is best for even lower epsilons. In the case of pre-training with ImageNet-21k, we find that DP-FC (10 epochs) outperforms all other methods across the board.}
	\label{tab:imagenet}
\end{table*}

\begin{table*}[!h]
	\centering
	\small
	\begin{tabular}{llll|cccccccc}
		\toprule
		\addlinespace[0.1cm]
		& & & & \multicolumn{8}{c}{Epsilon} \\
		\addlinespace[0.1cm]
		\hline
		\addlinespace[0.1cm]
		Pretraining          & Method                      & Epochs & Non-Private & 0.01 & 0.05 & 0.1  & 0.5  & 1.0  & 2.0  & 4.0  & 8.0  \\
		\midrule
		        
		\multirow{8}{*}{ImageNet-1K}   &  \multirow{ 1}{*}{DP-LS}
		                                                   & 1      & 91.3        & 81.1 & 83.7 & 84.3 & 86.3 & 87.7 & 88.7 & 89.6 & 90.3 \\ % 3 mins
		\addlinespace[0.1cm]
		%  \hline
		\addlinespace[0.1cm]
		                     & \multirow{ 2}{*}{DP-Newton} & 1      & 91.0        & 77.4 & 79.6 & 80.5 & 84.2 & 85.7 & 87.1 & 88.4 & 89.2 \\
		                     &                             & 10     & \textbf{91.6}        & 79.7 & 81.4 & 82.7 & 86.1 & 87.5 & 89.0 & 88.6 & 90.2 \\
		\addlinespace[0.1cm]
		%  \hline
		\addlinespace[0.1cm]
		                     & \multirow{ 2}{*}{DP-FC}     & 1      & 91.0        & 79.0 & 81.5 & 83.1 & 86.6 & 88.0 & 88.9 & 89.7 & 90.3 \\
		                     &                             & 10     & 91.1        & \textbf{81.2} & \textbf{84.7} & \textbf{86.5} & \textbf{89.5} & \textbf{90.4} & \textbf{91.0} & \textbf{91.1} & \textbf{91.1} \\
		\addlinespace[0.1cm]
		%  \hline
		\addlinespace[0.1cm]
		                     & \multirow{ 3}{*}{DP-Adam}   & 1      & 77.9        & 52.6 & 74.3 & 76.7 & 78.2 & 78.2 & 77.8 & 77.6 & 77.8 \\
		                     &                             & 10     & 82.3        & 56.4 & 78.3 & 76.7 & 82.0 & 82.6 & 82.4 & 82.2 & 82.7 \\ % df2x2 5m
		                     &                             & 100    & 87.5        & 40.9 & 63.8 & 72.3 & 83.6 & 86.8 & 87.4 & 87.4 & 87.6 \\ % df2x2 55 mins

		\addlinespace[0.2cm]
		\hline
		\addlinespace[0.2cm]
		        
		\multirow{8}{*}{ImageNet-21K} &    \multirow{ 1}{*}{DP-LS}
		                                                    & 1     & 96.5        & 94.5 & 95.2 & 95.4 & 95.8 & 96.0 & 95.5 & 96.2 & 96.3 \\ % 3 mins
		\addlinespace[0.1cm]
		%  \hline
		\addlinespace[0.1cm]
		                     & \multirow{ 2}{*}{DP-Newton} & 1      & 96.5        & 94.3 & 94.9 & 95.2 & 95.6 & 95.7 & 96.0 & 96.1 & 96.2 \\
		                     &                             & 10     & 96.5        & 94.8 & 95.2 & 95.4 & 95.6 & 95.9 & 96.0 & 96.2 & 96.2 \\
		\addlinespace[0.1cm]
		%  \hline
		\addlinespace[0.1cm]
		                     & \multirow{ 2}{*}{DP-FC}     & 1      & 96.6        & 94.6 & 95.2 & 95.5 & 95.9 & 96.0 & 96.2 & 96.3 & 96.3 \\
		                     &                             & 10     & \textbf{96.6}        & \textbf{94.8} & \textbf{95.6} & \textbf{95.8} & \textbf{96.1} & \textbf{96.3} & \textbf{96.5} & \textbf{96.5} & \textbf{96.5} \\
		\addlinespace[0.1cm]
		%  \hline
		\addlinespace[0.1cm]
		                     & \multirow{ 3}{*}{DP-Adam}   & 1      & 95.2        & 90.0 & 94.7 & 95.1 & 95.1 & 95.1 & 95.1 & 95.1 & 95.2 \\
		                     &                             & 10     & 96.1        & 83.8 & 94.9 & 95.5 & 95.8 & 95.8 & 95.8 & 95.9 & 96.0 \\ % df2x2 5m
		                     &                             & 100    & 96.5        & 64.0 & 90.0 & 93.3 & 95.5 & 95.7 & 95.9 & 96.1 & 96.2 \\ % df2x2 55 mins
		        
		\addlinespace[0.2cm]
		\hline
		\addlinespace[0.2cm]
		        
		\multirow{8}{*}{JFT} & \multirow{ 1}{*}{DP-LS}     & 1      & 98.9        & \textbf{97.4} & 98.2 & 98.4 & 98.4 & 98.6 & 98.8 & 98.8 & 98.9 \\ % 3 mins
		\addlinespace[0.1cm]
		%  \hline
		\addlinespace[0.1cm]
		                     & \multirow{ 2}{*}{DP-Newton} & 1      & 98.9        & 94.1 & 96.5 & 97.2 & 98.1 & 98.3 & 98.5 & 98.7 & 98.8 \\
		                     &                             & 10     & 98.9        & 95.9 & 97.5 & 97.9 & 98.2 & 98.5 & 98.6 & 98.4 & 98.8 \\
		\addlinespace[0.1cm]
		%  \hline
		\addlinespace[0.1cm]
		                     & \multirow{ 2}{*}{DP-FC}     & 1      & \textbf{98.9}        & 95.2 & 97.6 & 97.9 & 98.5 & 98.6 & 98.8 & 98.8 & \textbf{98.9} \\
		                     &                             & 10     & 98.9        & 97.3 & \textbf{98.2} & \textbf{98.4} & \textbf{98.8} & \textbf{98.8} & \textbf{98.9} & \textbf{98.9} & 98.9 \\
		\addlinespace[0.1cm]
		%  \hline
		\addlinespace[0.1cm]
		                     & \multirow{ 3}{*}{DP-Adam}   & 1      & 97.5        & 93.5 & 97.0 & 97.5 & 97.5 & 97.6 & 97.6 & 97.6 & 97.6 \\
		                     &                             & 10     & 98.7        & 87.6 & 97.7 & 98.1 & 98.5 & 98.6 & 98.7 & 98.7 & 98.7 \\ % df2x2 5m
		                     &                             & 100    & 98.9        & 79.2 & 93.2 & 96.3 & 98.3 & 98.6 & 98.6 & 98.8 & 98.8 \\ % df2x2 55 mins
		%   \hline
		%   ResNet-50x1 & 1 & $2 \cdot 10^{-3}$ \\
		\bottomrule
	\end{tabular}
	\caption{Comparison of Top-1 test accuracies when private finetuning on CIFAR-10. We denote accuracy $\leq$ 20\% with the symbol `-'. Similar to other datasets, DP-FC (10 epochs) outperform all other methods almost across the board with a single exception of epsilon of 0.01 when pre-training with JFT were DP-LS performs slightly better.}
	\label{tab:cifar10}
\end{table*}

\begin{table*}[!h]
	\centering
	\small
	\begin{tabular}{llll|cccccccc}
		\toprule
		\addlinespace[0.1cm]
		& & & & \multicolumn{8}{c}{Epsilon} \\
		\addlinespace[0.1cm]
		\hline
		\addlinespace[0.1cm]
		Pretraining & Method                      & Epochs & Non-Private & 0.01 & 0.05 & 0.1  & 0.5  & 1.0  & 2.0  & 4.0  & 8.0                     \\
		\midrule
		\multirow{8}{*}{ImageNet-1K} &      \multirow{ 1}{*}{DP-LS}
		& 1 & 71.9 & \textbf{49.2} & \textbf{51.8} & \textbf{53.9} & 57.6 & 60.0 & 62.5 & 65.4 & 67.5 \\ % 3m
		\addlinespace[0.1cm]
		%  \hline
		\addlinespace[0.1cm]
		            & \multirow{ 2}{*}{DP-Newton} & 1      & 68.8        & 44.6 & 48.6 & 49.3 & 50.2 & 51.2 & 54.4 & 57.2 & 59.5                    \\
		            &                             & 10     & \textbf{72.1}        & 36.4 & 48.2 & 49.8 & 50.7 & 51.8 & 53.5 & 56.1 & 59.5                    \\
		\addlinespace[0.1cm]
		%  \hline
		\addlinespace[0.1cm]
		            & \multirow{ 2}{*}{DP-FC}     & 1      & 68.7        & 49.2 & 50.2 & 52.1 & 58.3 & 60.8 & 62.8 & 64.6 & 66.1                    \\
		            &                             & 10     & 71.4        & 48.9 & 49.7 & 53.7 & \textbf{61.4} & \textbf{64.9} & \textbf{68.2} & \textbf{69.8} & \textbf{70.4}                    \\
		\addlinespace[0.1cm]
		%  \hline
		\addlinespace[0.1cm]
		            & \multirow{ 3}{*}{DP-Adam}   & 1      & 52.1        & -   & 26.0 & 34.6 & 47.6 & 51.2 & 51.7 & 51.7 & 52.2                    \\
		            &                             & 10     & 57.3        & -   & 20.0 & 33.5 & 51.2 & 54.4 & 55.6 & 57.5 & 56.3                    \\
		            &                             & 100    & 67.9        & -   & -   & -   & 36.3 & 40.2 & 51.5 & 58.4 & 63.7                    \\
		\addlinespace[0.2cm]
		\hline
		\addlinespace[0.2cm]

		\multirow{8}{*}{ImageNet-21K} &        \multirow{ 1}{*}{DP-LS}
		                                            & 1 & 83.9 & \textbf{77.2} & \textbf{77.7} & 78.1 & 79.8 & 80.7 & 81.4 & 81.9 & 82.4 \\ % 3m
		\addlinespace[0.1cm]
		%  \hline
		\addlinespace[0.1cm]
		            & \multirow{ 2}{*}{DP-Newton} & 1      & 83.0        & 76.5 & 77.2 & 77.2 & 77.6 & 78.3 & 79.0 & 79.5 & 80.5                    \\
		            &                             & 10     & 83.0        & 73.6 & 77.4 & 77.8 & 78.3 & 78.9 & 79.6 & 80.4 & 81.4                    \\
		\addlinespace[0.1cm]
		%  \hline
		\addlinespace[0.1cm]
		            & \multirow{ 2}{*}{DP-FC}     & 1      & 83.0        & 77.1 & 77.5 & 78.2 & 80.0 & 80.9 & 81.6 & 81.9 & 82.4                    \\
		            &                             & 10     & 84.3        & 77.1 & 77.1 & \textbf{78.5} & \textbf{81.6} & \textbf{82.7} & \textbf{83.3} & \textbf{83.8} & \textbf{83.9}                    \\
		\addlinespace[0.1cm]
		%  \hline
		\addlinespace[0.1cm]
		            & \multirow{ 3}{*}{DP-Adam}   & 1      & 79.9        & 27.6 & 67.0 & 72.7 & 78.3 & 79.5 & 79.7 & 79.7 & 79.7                    \\
		            &                             & 10     & 82.0        & -   & 55.7 & 67.5 & 78.6 & 80.7 & 81.5 & 81.5 & 81.6                    \\
		            &                             & 100    & \textbf{84.6}        & -   & 29.7 & 45.2 & 71.3 & 76.2 & 79.5 & 81.1 & 82.2                    \\
		        
		\addlinespace[0.2cm]
		\hline
		\addlinespace[0.2cm]
		        
		\multirow{8}{*}{JFT} & \multirow{ 1}{*}{DP-LS}
		                                        & 1 & \textbf{90.6} & \textbf{74.9} & \textbf{80.3} & \textbf{82.5} & 85.5 & 86.4 & 87.7 & 88.4 & 88.9 \\ % 3m
		\addlinespace[0.1cm]
		%  \hline
		\addlinespace[0.1cm]
		            & \multirow{ 2}{*}{DP-Newton} & 1      & 89.9        & 73.1 & 72.6 & 73.4 & 78.5 & 80.9 & 82.8 & 84.6 & 85.9                    \\
		            &                             & 10     & 89.9        & 69.6 & 75.7 & 76.7 & 77.4 & 77.6 & 80.0 & 82.9 & 85.4                    \\
		\addlinespace[0.1cm]
		%  \hline
		\addlinespace[0.1cm]
		            & \multirow{ 2}{*}{DP-FC}     & 1      & 89.9        & 73.5 & 78.6 & 81.0 & 85.2 & 86.7 & 87.7 & 88.3 & 88.6                    \\
		            &                             & 10     & 90.1        & 72.1 & 75.9 & 79.0 & \textbf{86.2} & \textbf{88.1} & \textbf{89.0} & \textbf{90.0} & \textbf{90.1}                    \\
		\addlinespace[0.1cm]
		%  \hline
		\addlinespace[0.1cm]
		            & \multirow{ 3}{*}{DP-Adam}   & 1      & 83.5        & 27.9 & 61.8 & 71.3 & 79.7 & 82.1 & 83.4 & 83.5 & 83.5                    \\
		            &                             & 10     & 88.2        & 21.9 & 60.2 & 69.7 & 81.9 & 83.9 & 86.2 & 86.8 & 87.8                    \\
		            &                             & 100    & 90.0        & -   & 29.4 & 50.5 & 73.7 & 78.7 & 83.1 & 86.1 & 88.0                    \\
		%   \hline
		%   ResNet-50x1 & 1 & $2 \cdot 10^{-3}$ \\
		\bottomrule
	\end{tabular}
	\caption{Comparison of Top-1 test accuracies when private finetuning on CIFAR-100. We denote accuracy $\leq$ 20\% with the symbol `-'.  Similar to other datasets, DP-FC outperforms all other methods for moderate privacy budgets whereas DP-LS performs slightly better for very strict privacy guarantees depending on the pre-training dataset.}
	\label{tab:cifar100}
\end{table*}
\textbf{Datasets.} 
We use 3 datasets for private finetuning, namely 1) ILSVRC-2012 ImageNet dataset \citep{deng2009imagenet} with 1k classes and 1.3M images (we refer to it as ImageNet in what follows) 2) CIFAR-10 and 3) CIFAR-100. We also refer to these as the private dataset for which we want a privacy guarantee. For pre-training, we rely on JFT-3B, ImageNet-21k and ImageNet-1K (as done in \cite{zhai2021scaling}). For JFT, we intentionally chose a slightly smaller version of the dataset i.e. JFT-3B instead of JFT-4B, enabling us to exactly follow \cite{zhai2021scaling} and thus lowering the risk of the project. Also note that, as done in recent works, none of our finetuning datasets in reality are sensitive datasets: we are only simulating a public/private dataset split only for demonstration purposes \citep{kurakin2022training, mehta2022large, dm_transfer_2022}. The JFT datasets are not publicly available but have been used extensively as a pre-training dataset in the non-private setting to obtain state-of-the-art results \citep{dosovitskiy2021an_vit,brock2021characterizing,tolstikhin2021mlpmixer,zhai2021scaling}. Similar to \cite{mehta2022large,dm_transfer_2022}, to make sure that our simulated ``public'' and ``private'' datasets capture a practical scenario, we carefully de-duplicate our pre-training datasets w.r.t. \textbf{all} splits of our finetuning datasets \citep{bit-paper,dosovitskiy2021an_vit}. More details about this process can be found in the appendix.

\textbf{Model variants.} We evaluate the transfer learning capabilities of the Vision Transformer (ViT) \citep{dosovitskiy2021an_vit} model family in our study. We follow the standard notation to indicate the model size and the input patch size, for example, ViT-B/32 means the ``Base" variant with 32x32 input patch size. Note that for ViT, compute requirements scales up as we reduce the patch size. We obtained features from ViT-G/14 model pre-trained on JFT-3B \citep{zhai2021scaling}, and ViT-B/16 pre-trained on ImageNet-21k and ImageNet-1k. \citep{augreg_steiner2021train}.

\textbf{Training details.} For our private fine-tuning experiments, to limit other confounding factors we always train in full batch setting. Also, similar to \cite{mehta2022large}, we initialize the last layer weights to zero (or a small value) for all our experiments.

Next, we present our main set of private fine-tuning results and core observations on all 3 datasets, namely ImageNet-1k (Table \ref{tab:imagenet}), CIFAR-10 (Table \ref{tab:cifar10}) and CIFAR-100 (Table \ref{tab:cifar100}).

\begin{figure}[h!]
  \centering
      \begin{subfigure}[b]{0.32\textwidth}
    \includegraphics[width=\textwidth]{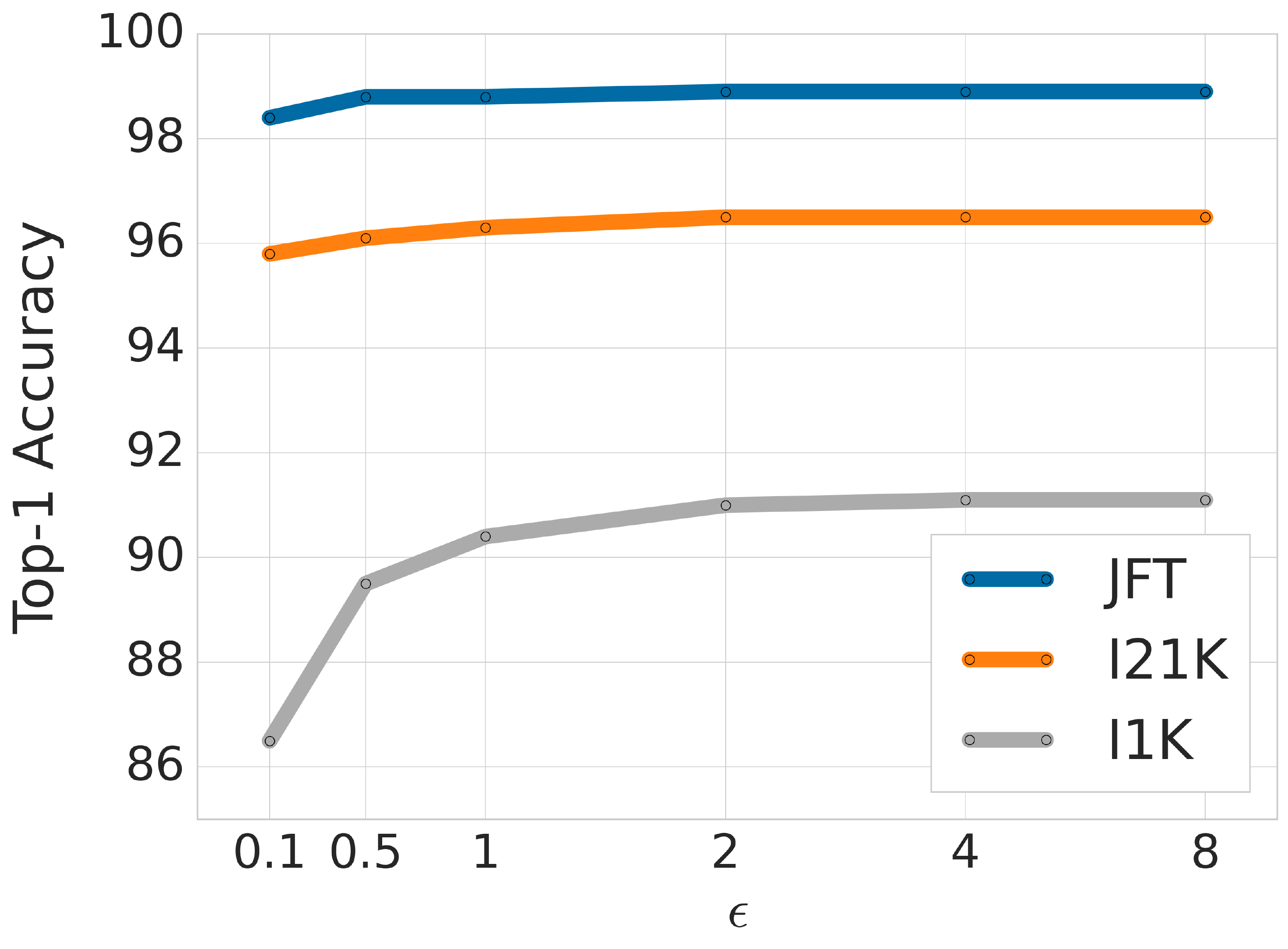}
    \caption{CIFAR-10}
  \end{subfigure}
    \begin{subfigure}[b]{0.32\textwidth}
    \includegraphics[width=\textwidth]{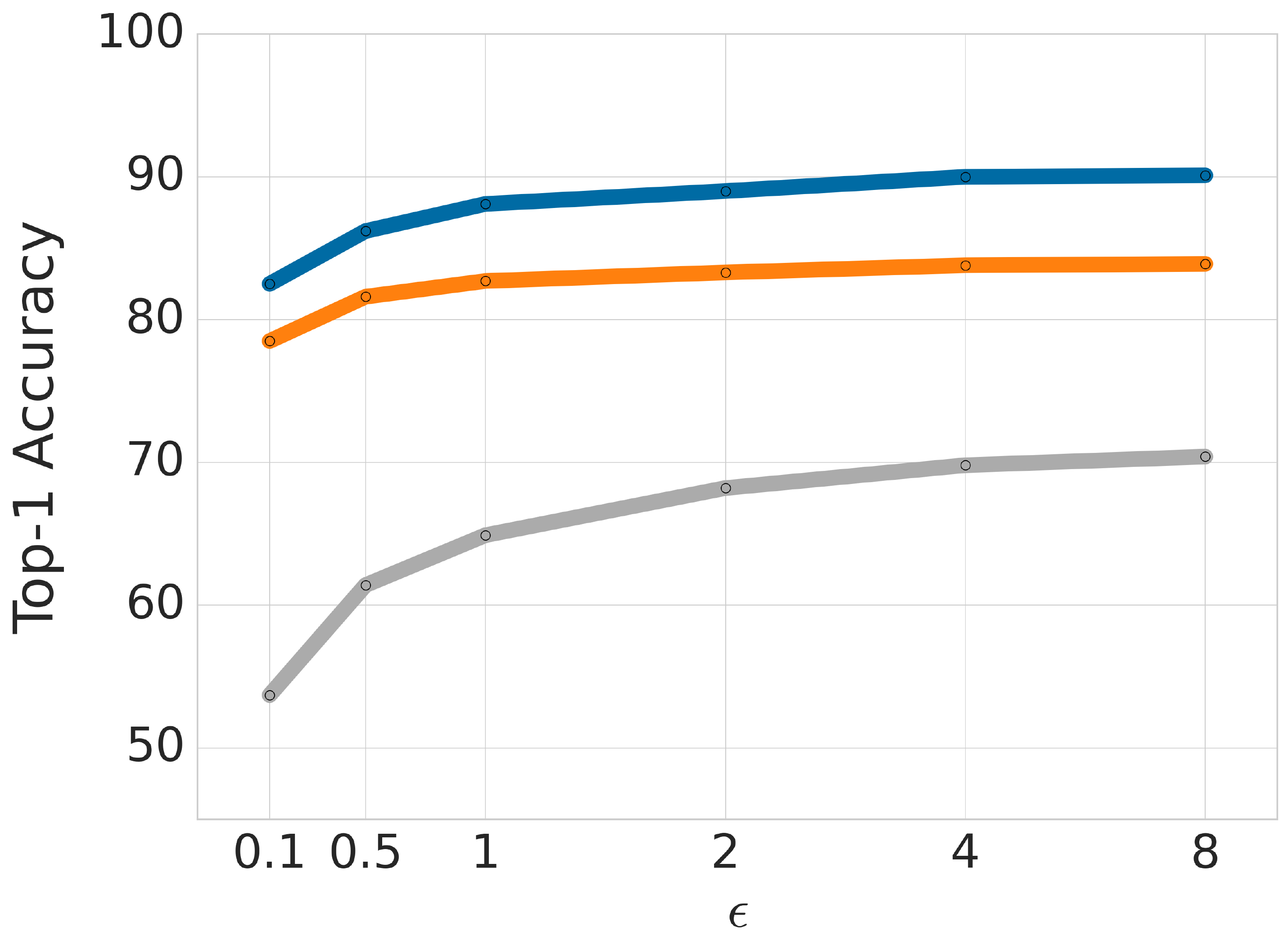}
    \caption{CIFAR-100}
  \end{subfigure}
    \begin{subfigure}[b]{0.32\textwidth}
    \includegraphics[width=\textwidth]{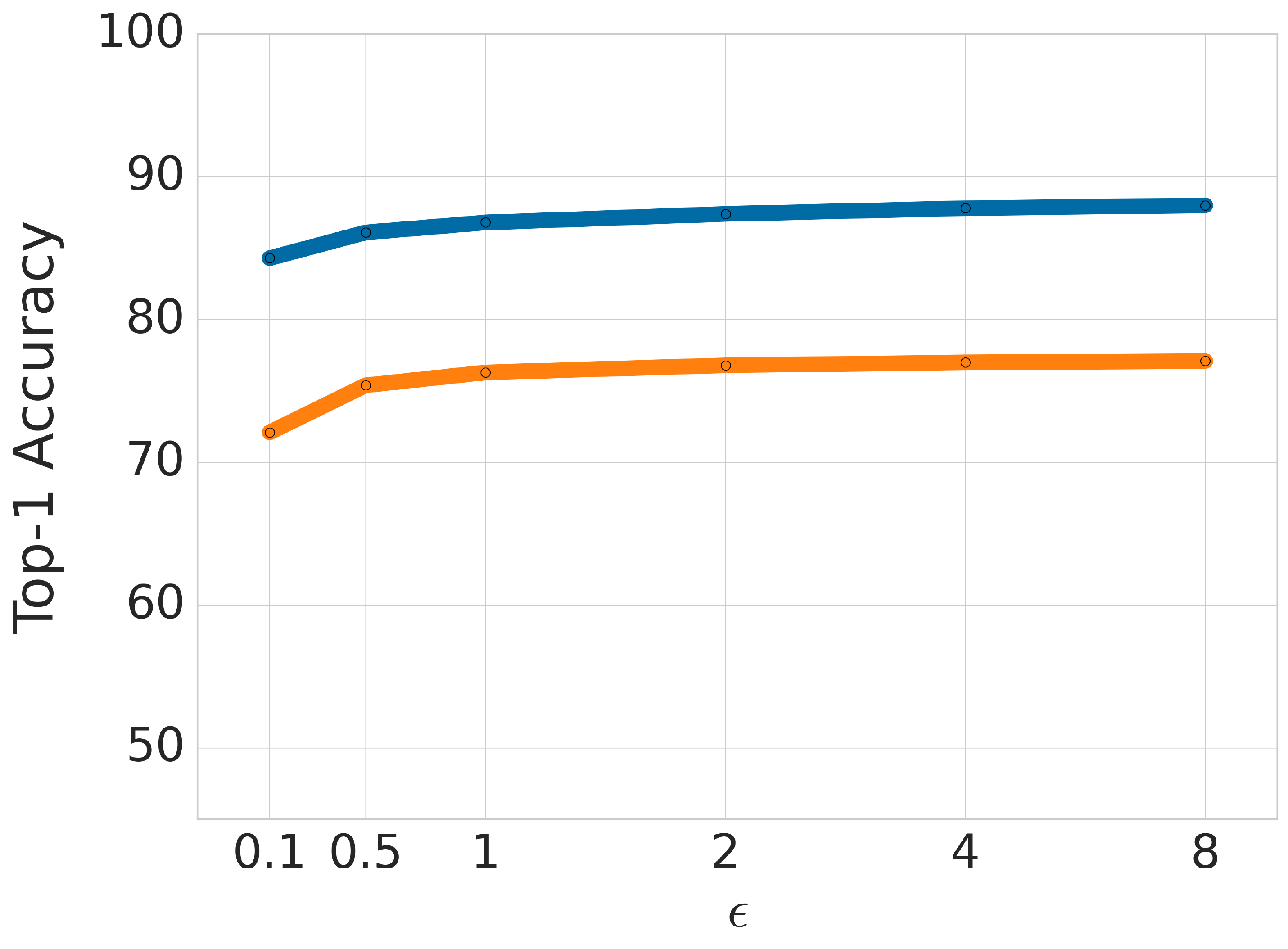}
    \caption{ImageNet-1K}
    \end{subfigure}
  \caption{Comparison of top-1 accuracies with private fine-tuning using DP-FC method on all 3 datasets across a range of epsilons. We observe that better pre-training helps even more for lower values of epsilon (stricter privacy guarantee).}
  \label{fig:better_pretraining}
\end{figure}
\subsection{Better pre-training continues to improve private fine-tuning performance}
We evaluate private fine-tuning performance on features extracted from pre-trained models of 2 sizes i.e. ViT-G/14 and ViT-B/16, pre-trained on 3 different datasets, namely JFT-3B, ImageNet-21K and ImageNet-1K. We do this to quantify the extent to which the representation quality is improved by increasing pre-training dataset in combination of model size. As shown in Figure \ref{fig:better_pretraining}, as the model size and pre-training dataset size is increased, we continue to see improvement in downstream private fine-tuning performance for all 3 datasets we consider. In addition, we make following observations from our results.

First, \textbf{better pre-training can help even more at stricter privacy budgets}. As shown in Figure \ref{fig:better_pretraining}, for both CIFAR-10 and CIFAR-100, when comparing features extracted from ViT-B/16 pre-trained with ImageNet-1K and features from ViT-G/14 pre-trained with JFT, the improvement in performance at $\epsilon=1$ is larger than $\epsilon=8$. We see a similar trend at even lower epsilons in Table \ref{tab:cifar10} and Table \ref{tab:cifar100}.

Second, \textbf{features extracted from off-the-shelf pre-trained models can suffice for DP}. Except for the fact that we deduplicate all splits of our pre-training dataset with our fine-tuning datasets, we use the exact same procedure used to pre-train large vision models. This suggests that, in practice, a recipe where features extracted from large off-the-shelf vision model used to privately fine-tune a classifier can be quite effective for DP performance. Since there is no need for a special pre-trained model for use in DP, this considerably reduces the cost of training private image classifiers.

Lastly, \textbf{private fine-tuning of high quality features closes the gap between private and non-private performance considerably}. In the non-private setting, \cite{zhai2021scaling} obtain an impressive 90.45\% top-1 accuracy by fine-tuning the whole ViT-G/14 model on ImageNet-1K dataset. We observe that even by fine-tuning just the last layer, we can obtain as much as 88.9\% top-1 accuracy on ImageNet-1K from the same sized model. Thus the marginal benefit of fine-tuning the whole model is $<$2\% even in the non-private case. In the private case, fine-tuning of pre-extracted features with DP-FC at $\epsilon=8$ leads to state of the art 88\% top-1 accuracy. On ViT-G/14, this represents $ <2.5\% $ difference between best non-private and private performance. This is also just 3\% below the best non-private accuracy of 91\% on ImageNet-1K \cite{yu2022coca}.

%Finally, we note that our best private top-1 accuracy of 88\% is just 3\% below the best non-private accuracy of 91\% on ImageNet \cite{yu2022coca}.

\subsection{Better optimizers improve privacy-utility trade-off}

We observe that the choice of optimizer can have a significant impact on the privacy-utility trade-off.

First, \textbf{optimizers that work well in the large batch regime are better for private training}. Even in the non-private setting, it is known that batch size is intimately tied to the optimization procedure and can lead to suboptimal use of resources if increased beyond a certain point while keeping number of epochs constant \cite{goyal2017largebatch,you2017largebatch,lamb}. The maximum batch size which can be used without jeopardizing the utility-compute trade-off is also heavily dependent on the choice of optimizer \cite{zhang2019algorithmic}. Though in the private setting, several works have observed that increasing the batch size leads to improved privacy-utility trade-off \cite{doorman_notallnoise_2021,li_large_batch,hoory_2021_llm}. It is also empirically observed that the utility in large batch regime can be further improved by leveraging optimizers which work well in large batch regime, such as DP-Adam or DP-LAMB \cite{anil21_dpbert,mehta2022large,bu2022scalable_oldsota}. Second-order methods such as LS or Newton are also well-suited to the large-batch regime (they were developed and are typically used in the full-batch setting).
% [note that I removed the bit about Newton converging fast in the non-private setting, I think this part is well known, that Newton will converge much faster than first-order methods. Also because we talk about convergence rate here, it sort of makes points 1 and 2 kind of intertwined, so maybe better to leave it out]

Second, \textbf{optimizers with faster convergence rates are advantageous in the private setting}, because they can reduce the number of epochs required for convergence. Indeed, in typical privacy analysis of iterative methods, the analysis works by composition over iterations, which means that the privacy penalty scales with the number of visitations of the data. Thus, in addition to the computational benefit, any improvement in convergence rate directly helps training with privacy because it requires less noise to be added under the same privacy constraints. We empirically observe that this reduction in noise in the case of DP-FC, in combination with sharing of the privatized Feature Covariance across classes and iterates, allow us to obtain better results with 10 epochs compared to even when using 100 epochs with DP-SGD.

\section{Related Work}
% General references to DP

Differential privacy \cite{dwork2006dp} is a popular method to guarantee privacy in a quantifiable way in many data-driven applications. To achieve differential privacy in machine learning, tasks practitioners commonly train models with privatized variation of gradient descent, called DP-SGD \citep{song2013stochastic,Bassily_2014,abadi2016dpsgd}. 

Despite theoretical guarantees, differentially private training has two major drawbacks which limits its wide adoption. First of all, differentially private training is slower compared to regular SGD. Training with DP-SGD is notably different from non-private training where the forward pass can be vectorized and only a single pre-accumulated gradient need be calculated and used per mini-batch. If implemented naively, this step alone increases the computational cost of DP training proportional to the batch size for a single step and the dimensionality of the model. Indeed, to address this, in several deep learning architectures, either its feasible to vectorize the computation \citep{subramani2020fastdpsgd} or it is sometimes possible to bound the sensitivity of each example without calculating the gradient for every example separately, leading to a dramatic cost-reduction both in terms of memory and compute \citep{Goodfellow2015EfficientPG,li2022llmdp, bu2022scalable_oldsota,Bu2022DifferentiallyPO}. Although, in our work this cost is minimal since we only train the last layer. We do note that computational burden can still be an issue for optimization schemes like DP-Newton in our work where we had to resort to feature clipping in order to produce bounds on sensitivity of the gradient and the hessian.

In addition to the computational cost, model trained with differentially privacy usually suffer from so-called ``utility loss'', which means that accuracy (or any other quality metric) is worse (and sometimes significantly worse) compared to accuracy of non-private model \citep{doorman_notallnoise_2021,klause2022differentially}. Over the years, several lines of improvements have been proposed including adaptive clipping \citep{pichapati2019adaclip, thakkar2019differentially,Bu2022AutomaticCD,Golatkar2022}, param-efficient finetuning \cite{Yu2022DifferentiallyPF,mehta2022large,Bu2022DifferentiallyPB,Cattan2022FineTuningWD,Li2022WhenDD} and even leveraging intermediate checkpoints \citep{dm_transfer_2022,Shejwalkar2022RecyclingSI}. One of the recent trends to improve utility of private models significantly involves various ideas related to transfer learning where previous works demonstrate improved performance in the setting where we have access to a large public or non-sensitive dataset of the same modality as the private data \cite{kurakin2022training,dm_transfer_2022,mehta2022large,tramer2021dpfeatures,Yu2022DifferentiallyPF,li2022llmdp,kurakin2022training,hoory_2021_llm}. Our work also leverages large pre-trained models in order to obtain high-quality features for private finetuning. In addition, similar to the works that focus on studying and reducing dimensionality of the model in the context of DP \citep{Li2022WhenDD,Golatkar2022,yu_YZCL21,zhang2021wide,zhou2020bypassing}, we focus solely on learning just the last layer privately, which can be seen as an implicit way to reduce dimensionality. 

In the context of differential privacy, several recent papers also advocate the use of large batch sizes in order to improve the privacy-utility tradeoff \citep{mehta2022large,McMahan2018learning,anil21_dpbert,doorman_notallnoise_2021,hoory_2021_llm,liu2021mldoctor,kurakin2022training}. Even though this work explicitly does not explore the affect changing the batch size, the fact that we are able to obtain state of the art results in the full batch setting may point to the effectiveness of large batch sizes in the context of DP.

Further, our work also zooms in on differentially private linear and logistic regression. Several existing works have studied differential private convex optimization \citep{chaudhuri2011differentially,kifer2012private,song2013stochastic,Bassily_2014, wu2016bolton, mcmahan2017learning,bassily2019private,iyengar2019towards, feldman2020hiding,bassily2020stability,song2020characterizing,andrew2021differentially}. There is also a growing interest in the special case of linear regression \citep{smith2017interaction,pmlr-v98-sheffet19a,liu2021differential,cai2021cost,pmlr-v178-varshney22a} and even second order methods in the context of differential privacy \cite{avellamedina2021differentially,chien2021private}. In this work, we illustrate empirically the extent to which second order methods can help in DP. It would be quite interesting to see how other popular second order methods like Shampoo \cite{gupta2018shampoo,anil2020scalable} would fare in the context of DP.

% The main focus of this work is differentially-private fine-tuning of last layer on top of pre-trained feature extractor. We look into different methods of single layer finetuning, such as linear regression, logistic regression~\cite{dp_logistic2008,dp_regression2012} and second order methods.

% $\color{red} UNFINISHED$

% TODO: are there any references to Newton method or second order methods we can cite?

%DP-SGD is notably different from non-private training where the forward pass can be vectorized and only a single pre-accumulated gradient need be calculated and used per mini-batch. If implemented naively, this step alone increases the computational cost of DP training proportional to the batch size for a single step. We do however note that, for several deep learning architectures, it is indeed possible to bound the sensitivity of each example without calculating the gradient for every example separately, leading to a dramatic cost-reduction both in terms of memory and compute \citep{Goodfellow2015EfficientPG}. One promising recent technique is Ghost Clipping \citep{li2022llmdp, bu2022scalable} where its possible to match the memory requirements of non-private training with marginal additional compute cost. In this paper, however, we stick to the exact scheme proposed in \cite{abadi2016dpsgd}. Our recommendation for fine-tuning just the layer also reduces computational cost and can be used in conjunction with other cost-saving techniques like Ghost Clipping.

\section{Limitations}
This work leverages a large properietary dataset called JFT-3B to pre-train ViT-G/14 model in order to illustrate the benefits of scale on differential privacy with transfer learning. In order to make our work more generalizable and reproducible, we also include results with models pre-trained with ImageNet-21k and ImageNet-1k. 

Another limitation of our work may be the fact that our pre-training dataset is largely in-distribution with the private fine-tuning datasets. We would like to argue that, in practice, this is still valuable since it illustrates the effectiveness of the approach, and helps estimate the utility of gathering a public dataset to pre-train on, given a sensitive dataset that one wants privacy guarantee over. Finally, out of distribution performance is an interesting research question even in the non-private setting and its exploration in the context of privacy can be a direction of very valuable future work.

In terms of societal impact, the biggest cost of this work is training the largest ViT-G model on a large dataset and its energy impact. However, we argue that our results ultimately point towards amortizing and increasingly leveraging already trained models for high-performance DP training, and thus potentially reducing the overall energy consumption.

\section{Conclusion}
In this work, we focus on private finetuning of image classification datasets using features extracted from a pre-trained model. Given that, with privacy, finetuning just on the features is significantly cheaper than finetuning the full model, we systematically explore optimization schemes which are perceived to be expensive in very high-dimensional settings and its effect on private finetuning performance. As illustrated on 3 finetuning datasets i.e. ImageNet-1k, CIFAR-10 and CIFAR-100, we find that DP-LS (Least Squares) outperforms DP-SGD with logistic regression, especially for lower values of epsilons. Given the intuition that 2nd order information may be the reason for superior performance of DP-LS, we also explore Newton's method with Logistic Loss. Noticing that the amount of noise required by Newton's method scales with the number of classes and iterations, we introduce an optimization scheme called DP-FC which replaces the hessian by the feature covariance matrix, that can be shared across classes and iterations. Using this insight, we demonstrate that it is indeed possible to get state of the art results by just finetuning the last layer of a pre-trained model with privacy constraints. Most remarkably, we obtain top-1 accuracy of 88\% on ImageNet-1K under DP guarantee of (8, $8 * 10^{-7}$) and 84.3\% under (0.1, $8 * 10^{-7}$). All of our results rely on leveraging well-understood procedures for transfer learning on standard architectures. We hope that our work significantly reduces the barrier in training private models.

\bibliography{main}
\bibliographystyle{plainnat}

\appendix
\section{Appendix}
\section{Algorithmic details}

% \subsection{}

\subsection{Privacy Analysis details for DP-SGD}
The privacy parameters $(\epsilon, \delta)$ are functions of $C$, $\sigma$, $|B_t|$, $|\cD|$, and the total number of iterations $T$. DP-SGD algorithm involves setting the right clipping norm $C$ and the noise multiplier $\sigma$ given a privacy budget, batch and dataset size. The $(\epsilon, \delta)$ guarantee is computed by analysis of the Gaussian Mechanism with privacy amplification by subsampling and composition across across iterations \citep{KLNRS,Bassily_2014,abadi2016dpsgd,mironov2017renyi,mcmahan2017learning, mironov2019rnyi,Erlingsson_amplification_2019,zhu2019poission,feldman2020hiding,wang_subsampled_2020}.  Our implementation relies on Tensorflow Privacy \footnote{\url{https://github.com/tensorflow/privacy}} codebase for conversion of $(\epsilon, \delta)$ and clipping norm $C$ to/from noise multiplier $\sigma$.  We rely on the default R\'enyi accountant implementation already open-sourced as part of Tensorflow Privacy library.

To put the epsilon-delta values in context, privacy guarantee for let's say $\epsilon \approx 4$ on ImageNet-1K satisfies a much stronger property of zCDP $\leq$ 1 (0.154 for $\epsilon=4$) which is by now an industry standard.

\section{Pre-training Details}
We conduct all our experiments in Jax \citep{jax2018github,XLA} is framework that leverages just-in-time compilation using XLA\footnote{https://www.tensorflow.org/xla} and does auto-vectorization of the backward pass. We leverage this functionality throughout our experiments. Finally, we conduct our experiments on TPUv4 architecture.

\subsection{Pre-training with JFT-3B}
\textbf{Dataset.}
Contrary to \cite{mehta2022large,dm_transfer_2022}, we use a smaller version of JFT, namely JFT-3B (instead of JFT-4B) for our pre-training. We do this to lower the risk of the project and follow \cite{zhai2021scaling} exactly for pre-training ViT-G/14 model. JFT-3B dataset consists of nearly 3 billion images, annotated with a class-hierarchy of around 30k labels via a semiautomatic pipeline. As done previously, we ignore the hierarchical aspect of the labels and use only the assigned labels as targets for multi-label classification via a sigmoid cross-entropy loss.

\textbf{Deduplication.} In order to both not inflate our results and break privacy guarantee offered by fine-tuning privately on ImageNet, we extend the deduplication process proposed by \cite{bit-paper} and deduplicate both JFT-3B with respect to all splits of ImageNet. We use a model based deduplication system which removes both exact and near-duplicates across common image transformation like crop, shift, resize etc.

\textbf{Hyperparameters.}
At the pre-training stage, we follow \cite{zhai2021scaling} exactly and stick with the common practice of employing Adafactor optimizer with $\beta_1= 0.9$ and $\beta_2 = 0.999$, with a batch size of 32768, dropout rate of 0.0, clip global norm of 1, and a high weight decay of 3.0 for the ``head" and 0.03 for the ``body". In addition, we remove the additional $[class]$ token to save memory. Finally, all the models are pre-trained at resolution [224, 224], with inception crop followed by random horizontal flip pre-process. We also use reciprocal square-root schedule with a linear learning rate warmup of 10k steps. Finally, ViT-G/14 model was pre-trained using 2048 TPUv3 chips.

\subsection{Pre-training with ImageNet21k and ImageNet1k}
\label{sec:i21k}

\textbf{Datasets.} ImageNet-21k is a superset of ImageNet-k with 21k classes and 14M images \citep{deng2009imagenet}. Similar to before, in order to both not inflate our results and break privacy guarantee, we extend the deduplication process proposed by \cite{bit-paper} and deduplicate ImageNet-21k with respect to all splits of ImageNet-1k, CIFAR-10 and CIFAR-100. Similarly, we deduplicate ImageNet-1k with respect to all splits of CIFAR-10 and CIFAR-100.

\textbf{Hyperparameters.}
At the pre-training stage, we stick with the common practice of employing Adam optimizer (even for ResNet) with $\beta_1=$ 0.9 and $\beta_2 =$ 0.999, with a batch size of 4096. Unlike pre-training with JFT dataset, we follow recommendations from \cite{augreg_steiner2021train} to use AugReg strategy where we lower the weight decay to 0.1 (which gets multiplied by the learning rate) and don't use dropout but instead use data augmentation strategy called \textbf{medium1} which combines Mixup with $\alpha=0.2$ \citep{zhang2017mixup} and RandAugment with $l=15$ and $m=2$ \citep{randugment2020}. We also use linear learning rate warmup until 10k steps and linearly decay it until the end. Our model is pre-trained with 224x224-sized images.

\begin{table}[H]
    \centering
    % \small
    \begin{tabular}{c|c|c|c|c}
    \toprule
        Model & Dataset & Epochs & Base $\eta$ & TPU v4 hours\\
        \midrule
      ViT-B/16 & ImageNet-21k & 300 & $10^{-3}$ & 2.7k \\
      ViT-B/16 & Imagenet-1K & 300 & $10^{-3}$ & 0.4k \\
         \bottomrule
         \addlinespace[0.3cm]
    \end{tabular}
        \caption{Pre-training hyperparams. We used batch size of 4096, learning rate warmup of 10k steps and then linear decay. Additionally, we set dropout rate to 0.0, clip global norm to 1 and weight decay to 0.0001. We use images of resolution 224x224. Note that we intentionally keep the model size the same to illustrate the effect of larger pre-training dataset and its effect on private fine-tuning.}
            \label{tab:learning_rate}
\end{table}

\section{Finetuning Details}

\subsection{Datasets}

\textbf{ImageNet-1k} We fine-tune on ImageNet train split and present the Top-1 accuracies we obtain from the official test split. Following \cite{mehta2022large}, we used images of input resolution 256x256 which is central cropped from a resolution of 384x384. Note that this is slightly lower resolution and without Inception Crop \citep{Szegedy2015GoingDW_inception} which is typically done in non-private setting. Finally, for training with DP, we fixed $\delta$ to be 8e-7.

\textbf{CIFAR-10 and CIFAR-100} Similar to above, we fine-tune on train split and present the Top-1 accuracies we obtain from the official test split. We also changed the input resolution to 256x256 which is central cropped from an image of resolution 384x384. Again, this may look a little unusual at first for CIFAR-10 and CIFAR-100 since the original resolution of the images is 32x32. But we first upsample them to 384x384 and then central crop them. We found that using higher resolution images made a big difference in performance (even in non-private setting), especially when using features from a pre-trained model. Finally, for training with DP, we fixed $\delta$ to be 1e-5.

\subsection{DP-Adam hyperparameters}

\begin{table}[H]
    \centering
    % \small
            \label{tab:hparams_lamb}
    \begin{tabular}{cccccc}
    \toprule
        Model & Pre-training DS & Fine-tuning DS & $\eta$ & $\lambda$ & DP Clipping Norm ($C$)  \\
        \midrule
      ViT-G/14 & JFT-3B & ImageNet-1k & $[10^{-8}, 10^{8}]$ & $[10^{-8}, 10^{8}]$ & 1.0 \\
           ViT-G/14 & JFT-3B & CIFAR-10 & $[10^{-8}, 10^{8}]$ & $[10^{-8}, 10^{8}]$ & 0.005 \\
      ViT-G/14 & JFT-3B & CIFAR-100 & $[10^{-8}, 10^{8}]$ & $[10^{-8}, 10^{8}]$ & 0.005 \\
      ViT-B/16 & ImageNet-21k & ImageNet-1k & $[10^{-8}, 10^{8}]$ & $[10^{-8}, 10^{8}]$ & 0.005 \\
      ViT-B/16 & ImageNet-21k & CIFAR-10 & $[10^{-8}, 10^{8}]$ & $[10^{-8}, 10^{8}]$ & 0.005 \\
      ViT-B/16 & ImageNet-21k & CIFAR-100 & $[10^{-8}, 10^{8}]$ & $[10^{-8}, 10^{8}]$ & 0.005 \\
           ViT-B/16 & ImageNet-1k & CIFAR-10 & $[10^{-8}, 10^{8}]$ & $[10^{-8}, 10^{8}]$ & 0.005 \\
      ViT-B/16 & ImageNet-1k & CIFAR-100 & $[10^{-8}, 10^{8}]$ & $[10^{-8}, 10^{8}]$ & 0.005 \\
         \bottomrule
         \addlinespace[0.3cm]
    \end{tabular}
        \caption{Fine-tuning hyperparams for DP-Adam. All models are trained in full-batch setting with a constant learning rate and no dropout. When training the models with DP, we replace the global clipping with per example clipping norm as specified in the table. Following \cite{mehta2022large}, we set initial weights to 0.0, bias to -10.0 and train with sigmoid cross-entropy loss. Note that we employed a Bayesian optimization package called Vizier \citep{vizier, oss_vizier} and used a total of 200 trials for jointly tuning both the learning rate and weight decay as specified in the table.}
\end{table}

\subsection{DP-LS hyperparameters}

\begin{table}[H]
    \centering
    % \small
            \label{tab:hparams_lamb}
    \begin{tabular}{cccccc}
    \toprule
        Model & Pre-training DS & Fine-tuning DS & $\alpha$ & $\lambda$ & DP Clipping Norm ($C$)   \\
        \midrule
      ViT-G/14 & JFT-3B & ImageNet-1k & $[10^{-8}, 10^{8}]$ & $[10^{-8}, 10^{8}]$ & 1.0 \\
           ViT-G/14 & JFT-3B & CIFAR-10 & $[10^{-8}, 10^{8}]$ & $[10^{-8}, 10^{8}]$ & 0.005 \\
      ViT-G/14 & JFT-3B & CIFAR-100 & $[10^{-8}, 10^{8}]$ & $[10^{-8}, 10^{8}]$ & 0.005 \\
      ViT-B/16 & ImageNet-21k & ImageNet-1k & $[10^{-8}, 10^{8}]$ & $[10^{-8}, 10^{8}]$ & 0.005 \\
      ViT-B/16 & ImageNet-21k & CIFAR-10 & $[10^{-8}, 10^{8}]$ & $[10^{-8}, 10^{8}]$ & 0.005 \\
      ViT-B/16 & ImageNet-21k & CIFAR-100 & $[10^{-8}, 10^{8}]$ & $[10^{-8}, 10^{8}]$ & 0.005 \\
           ViT-B/16 & ImageNet-1k & CIFAR-10 & $[10^{-8}, 10^{8}]$ & $[10^{-8}, 10^{8}]$ & 0.005 \\
      ViT-B/16 & ImageNet-1k & CIFAR-100 & $[10^{-8}, 10^{8}]$ & $[10^{-8}, 10^{8}]$ & 0.005 \\
         \bottomrule
         \addlinespace[0.3cm]
    \end{tabular}
        \caption{Fine-tuning hyperparams for DP-LS. All models are trained in full-batch setting. When training the models with DP, we replace the global clipping with per example clipping norm as specified in the table. Specific to DP-LS, we clip the RHS with $C$ and the gramians with $C^2$. Interestingly, least squares is invariant to the starting weights which takes an important confounding factor away from the private training procedure. Similar to DP-Adam, we employed a Bayesian optimization package called Vizier \citep{vizier, oss_vizier} and used a total of 200 trials for jointly tuning both the $\alpha$ and weight decay as specified in the table.}
\end{table}

\subsection{DP-Newton hyperparameters}

\begin{table}[H]
    \centering
    % \small
            \label{tab:hparams_lamb}
    \begin{tabular}{cccccc}
    \toprule
        Model & Pre-training DS & Fine-tuning DS & $\eta$ & $\lambda$ & DP Clipping Norm ($C$)  \\
        \midrule
      ViT-G/14 & JFT-3B & ImageNet-1k & $[10^{-8}, 10^{8}]$ & $[10^{-8}, 10^{8}]$ & 1.0 \\
      ViT-G/14 & JFT-3B & CIFAR-10 & $[10^{-8}, 10^{8}]$ & $[10^{-8}, 10^{8}]$ & 0.005 \\
      ViT-G/14 & JFT-3B & CIFAR-100 & $[10^{-8}, 10^{8}]$ & $[10^{-8}, 10^{8}]$ & 0.005 \\
      ViT-B/16 & ImageNet-21k & ImageNet-1k & $[10^{-8}, 10^{8}]$ & $[10^{-8}, 10^{8}]$ & 0.005 \\
      ViT-B/16 & ImageNet-21k & CIFAR-10 & $[10^{-8}, 10^{8}]$ & $[10^{-8}, 10^{8}]$ & 0.005 \\
      ViT-B/16 & ImageNet-21k & CIFAR-100 & $[10^{-8}, 10^{8}]$ & $[10^{-8}, 10^{8}]$ & 0.005 \\
           ViT-B/16 & ImageNet-1k & CIFAR-10 & $[10^{-8}, 10^{8}]$ & $[10^{-8}, 10^{8}]$ & 0.005 \\
      ViT-B/16 & ImageNet-1k & CIFAR-100 & $[10^{-8}, 10^{8}]$ & $[10^{-8}, 10^{8}]$ & 0.005 \\
         \bottomrule
         \addlinespace[0.3cm]
    \end{tabular}
        \caption{Fine-tuning hyperparams for DP-Newton. All models are trained in full-batch setting with a constant learning rate and no dropout. When training the models with DP, we replace the global clipping with per example clipping norm as specified in the table. Specific to DP-Newton, we clip the features instead of the gradient and the Hessian. We do this save on computational cost since now we don't need to explicitly compute per-example gradient and the Hessian. For the privacy analysis, we use the clipping norm $C$ to sanitize the gradient and $\frac{C^2}{4.0}$ for the Hessian. Following \cite{mehta2022large}, we set initial weights to 0.0 and train with sigmoid cross-entropy loss. We employed a Bayesian optimization package called Vizier \citep{vizier, oss_vizier} and used a total of 300 trials for jointly tuning both the learning rate and weight decay as specified in the table.}
\end{table}

\subsection{DP-FC hyperparameters}

\begin{table}[H]
    \centering
    % \small
            \label{tab:hparams_lamb}
    \begin{tabular}{cccccc}
    \toprule
        Model & Pretraining DS & Finetuning DS & $\eta$ & $\lambda$ & DP Clipping Norm ($C$)  \\
        \midrule
      ViT-G/14 & JFT-3B & ImageNet-1k & $[10^{-8}, 10^{8}]$ & $[10^{-8}, 10^{8}]$ & 1.0 \\
           ViT-G/14 & JFT-3B & CIFAR-10 & $[10^{-8}, 10^{8}]$ & $[10^{-8}, 10^{8}]$ & 0.005 \\
      ViT-G/14 & JFT-3B & CIFAR-100 & $[10^{-8}, 10^{8}]$ & $[10^{-8}, 10^{8}]$ & 0.005 \\
      ViT-B/16 & ImageNet-21k & ImageNet-1k & $[10^{-8}, 10^{8}]$ & $[10^{-8}, 10^{8}]$ & 0.005 \\
      ViT-B/16 & ImageNet-21k & CIFAR-10 & $[10^{-8}, 10^{8}]$ & $[10^{-8}, 10^{8}]$ & 0.005 \\
      ViT-B/16 & ImageNet-21k & CIFAR-100 & $[10^{-8}, 10^{8}]$ & $[10^{-8}, 10^{8}]$ & 0.005 \\
     ViT-B/16 & ImageNet-1k & CIFAR-10 & $[10^{-8}, 10^{8}]$ & $[10^{-8}, 10^{8}]$ & 0.005 \\
      ViT-B/16 & ImageNet-1k & CIFAR-100 & $[10^{-8}, 10^{8}]$ & $[10^{-8}, 10^{8}]$ & 0.005 \\
         \bottomrule
         \addlinespace[0.3cm]
    \end{tabular}
        \caption{Fine-tuning hyperparams for DP-FC. All models are trained in full-batch setting with a constant learning rate and no dropout. When training the models with DP, we replace the global clipping with per example clipping norm as specified in the table. Specific to DP-FC, we clip the per-example gradients with clipping norm $C$ and $C^2$ for the gramian which is shared across classes and iterations. Following \cite{mehta2022large}, we set initial weights to 0.0, set initial bias to -10.0 and train with sigmoid cross-entropy loss. We employed a Bayesian optimization package called Vizier \citep{vizier, oss_vizier} and used a total of 200 trials for jointly tuning both the learning rate and weight decay as specified in the table.}
\end{table}

% \section{Pretraining details for I1K}

% https://colab.sandbox.google.com/github/google-research/vision_transformer/blob/main/vit_jax_augreg.ipynb#scrollTo=MI_XTt1vX930
% L/16	L/16	i1k	L_16-i1k-300ep-lr_0.001-aug_medium2-wd_0.1-do_0.1-sd_0.1	384	49.87	cifar100	0.9088	L_16-i1k-300ep-lr_0.001-aug_medium2-wd_0.1-do_0.1-sd_0.1--cifar100-steps_10k-lr_0.01-res_384

% B/16	B/16	i1k	B_16-i1k-300ep-lr_0.001-aug_medium2-wd_0.1-do_0.1-sd_0.1	224	658.56	cifar100	0.8969	B_16-i1k-300ep-lr_0.001-aug_medium2-wd_0.1-do_0.1-sd_0.1--cifar100-steps_10k-lr_0.01-res_224

% https://xm2a.corp.google.com/experiments/25104608/workUnits/1043

\end{document}